\let\color@begingroup\relax
   \let\color@endgroup\relax}{}%
\def\fix@ieeecolor@hbox#1{%
  \hbox{\color@begingroup#1\color@endgroup}}
\patchcmd\@makecaption{\hbox}{\fix@ieeecolor@hbox}{}{\FAILED}
\patchcmd\@makecaption{\hbox}{\fix@ieeecolor@hbox}{}{\FAILED}
\definecolor{subsectioncolor}{rgb}{0,0,0}
\DeclareMathOperator*{\argmin}{arg\,min}
\newcolumntype{C}{>{\Centering\arraybackslash}X} 
\newcolumntype{L}{>{\arraybackslash}X} 
\newtheorem{theorem}{Theorem}
\newtheorem{proposition}{Proposition}
\newtheorem{problem}{Problem}
\newtheorem{assumption}{Assumption}
\newtheorem{definition}{Definition}
\newtheorem{remark}{Remark}
\newcommand{\Obs}[1]{\mathcal{O}^{#1}}
\newcommand{\dist}{\textnormal{dist}}
\begin{document}
\title{Autonomous Navigation with Convergence Guarantees in Complex Dynamic Environments}
\author{Albin Dahlin and Yiannis Karayiannidis, \IEEEmembership{Member, IEEE}
\thanks{This work was
supported in part by the Chalmers AI Research Centre (CHAIR) and AB Volvo
through the project AiMCoR and in part by the ELLIIT Strategic Research Area.}
\thanks{A. Dahlin is with the Department of Electrical Engineering, Chalmers University of Technology, SE-412 96 Gothenburg, Sweden 
        {\tt\small albin.dahlin@chalmers.se}}
\thanks{Y. Karayiannidis is with the Department of Automatic Control, LTH, Lund University,  SE-221 00 Lund, Sweden. Y. K. is a member of the ELLIIT Strategic Research Area at Lund University.
        {\tt\small yiannis@control.lth.se}.}}

\maketitle
\begin{abstract}
    This article addresses the obstacle avoidance problem for setpoint stabilization and path-following tasks in complex dynamic 2D environments that go beyond conventional scenes with isolated convex obstacles. A combined motion planner and controller is proposed for setpoint stabilization that integrates the favorable convergence characteristics of closed-form motion planning techniques with the intuitive representation of system constraints through Model Predictive Control (MPC). The method is analytically proven to accomplish collision avoidance and convergence under certain conditions, and it is extended to path-following control. Various simulation scenarios using a non-holonomic unicycle robot are provided to showcase the efficacy of the control scheme and its improved convergence results compared to standard path-following MPC approaches with obstacle avoidance.
\end{abstract}

\section{Introduction}
\label{sec:intro}
Setpoint stabilization, which entails driving a system to a specified goal state, and path-following, which aims to follow a predefined path as closely as possible, are common tasks in autonomous agent applications, involving autonomous mobile vehicles \cite{strands_project_17}, drones \cite{cai_etal_14}, autonomous surface vessels \cite{mazenc_etal_02}, and robotic manipulators \cite{gill_etal_13}. As autonomous agents, or robots, are increasingly employed in dynamically changing environments, the need for sensor-based motion controllers able to react to unforeseen circumstances is prominent. 
To achieve successful online navigation in such environments, a key aspect is to adaptively modify the constraints imposed by the robot's surroundings, possibly involving the presence of moving obstacles. A vast part of the literature in online obstacle avoidance are based either on closed-loop or optimization based control solutions, where specific requirements on the obstacle shapes are imposed and cases of intersecting obstacles are ignored. However, closely positioned obstacles are frequently perceived as intersecting, e.g., when inflation is used to account for robot radius or safety margins, or in case of perception uncertainties.
Breaking the conditions of disjoint obstacles yields local minima, jeopardizing convergence to the desired goal or path. In this work we combine the convergence properties of closed-form Dynamical Systems (DS) and the intuitive encoding of system constraints for Model Predictive Control (MPC) to propose a holistic control solution with guaranteed convergence also in scenarios of nontrivial obstacle constellations.

\subsection{Related Work}
A popular motion planning paradigm is the use of sampling-based approaches, such as probabilistic road maps \cite{kavraki_etal_96} and rapidly exploring randomly trees \cite{lavalle_kuffner_00}. These are in their original forms not suitable for online motion planning and various methods to reduce computational complexity have been proposed \cite{marble_bekris_13,ichter_etal_20,gammell_etal_14,yang_etal_14}. A more computationally efficient strategy is to construct closed form DS that possess desirable stability and convergence properties, eliminating the need to find a complete path at each iteration. Specifically, artificial potential fields \cite{khatib_85}, repelling the robot from the obstacles, have become popular \cite{ginesi_etal_19,stavridis_etal_17}. 
A drawback of the additive potential fields is the possible occurrences of local minimum other than the goal point. To address this problem, navigation functions \cite{rimon_koditschek_92,loizou_11,kumar_etal_22} and harmonic potential fields have emerged \cite{feder_slotine_97,huber_etal_19,huber_etal_22}. A repeated assumption in the aforementioned works enabling the proof of (almost) global convergence is the premise of the environment being a \textit{disjoint star world} (DSW), i.e. all obstacles are starshaped and mutually disjoint\footnote{See Section \ref{sec:starsets} for complete definition.}. 
However, intersecting obstacles are frequently occurring, e.g., when modelling complex obstacles as a combination of several simpler shapes, or when the obstacle regions are padded to take robot radius or safety margins into account. To handle intersecting circular obstacles, \cite{daily_bevly_08} proposed weighted average of harmonic functions, but unwanted local minima still occurred and the authors recommended to keep the number of combined obstacles low.
In \cite{dahlin_karayiannidis_23_1} we presented a method, here referred to as ModEnv$^\star$, which modulates the robot environment to obtain a DSW to extend the applicable scenarios where the aforementioned DS methods achieve convergence properties. The approach was limited to the case with a robot operating in the full Euclidean space and no conditions for successful generation of a DSW was provided. 
In later years, MPC has become increasingly popular, where the obstacle regions (or approximation of the regions) are typically explicitly expressed in the optimization problem \cite{schulman_etal_14,zhang_etal_21,hermans_etal_21}. Compared to the closed form control laws, MPC allows for an easy encoding of the system constraints and formulation of desired behaviors, such as smooth control input. However, due to the receding horizon nature of the MPC, convergence guarantees are not provided. Specifically, in environments with large or intersecting obstacles, the MPC solution may lead to local attractors at obstacle boundaries. A simultaneous path planning and tracking framework was proposed in \cite{ji_etal_16}, combining potential fields and MPC. The method however relies on additive potential fields which may introduce local attractors in the case of closely positioned obstacles. 
In \cite{dahlin_karayiannidis_23_2} we presented a motion control scheme for setpoint stabilization with collision avoidance consisting of three main components: environment modification into a DSW, DS-based generation of a receding horizon reference path (RHRP), and an MPC to compute admissible control inputs to drive the robot along the RHRP. 
Whereas collision avoidance is ensured, no guarantees for convergence were provided. Convergence may be inhibited by two situations; 1) the modified environment is not a DSW such that convergence guarantees for the DS method are lost, 2) the MPC solution does not provide a movement of the robot along the RHRP due to limited control horizon and robot constraints. 

Various path-following techniques considering obstacle avoidance have been presented. In \cite{lapierre_etal_07}, a backstepping approach was presented for a unicycle type where obstacle avoidance was obtained through the Deformable Virtual Zone principle, path-following using Line-of-Sight for Unmanned Surface Vessels was in \cite{moe_pettersen_16} adapted to obtain collision avoidance, and vector fields was constructed for use in a Unmanned Aerial Vehicle in \cite{wilhelm_clem_19}. 
As for setpoint stabilization, there has been a growing research focus for path-following control based on MPC where obstacles are encoded directly as constraints \cite{howard_etal_10,brito_etal_19,zube_15,arbo_etal_17}. As stated above, these approaches may however lead the robot to full stop in occasions of intersecting obstacles where the objectives of path-following and obstacle avoidance conflict. To mitigate the risk of being trapped at obstacle boundaries, the optimization problem was relaxed in \cite{sanchez_etal_21} by introduction of an auxiliary reference. Obstacle avoidance is however attained based on additive potential fields and the problem of undesired local attractors is not resolved.

\subsection{Contribution}
In this work, we expand upon the control scheme introduced in \cite{dahlin_karayiannidis_23_2} to enable the derivation of convergence properties and to facilitate its implementation within confined workspaces. As convergence rely on environment modification into a DSW, we first derive sufficient conditions to obtain a DSW for ModEnv$^{\star}$ (Algorithm 2 in \cite{dahlin_karayiannidis_23_1}). Additionally, the method is enhanced to treat also the case of confined workspaces.
Moreover, the control scheme is extended to obtain path-following behavior with obstacle avoidance. 

In all, the main contributions are:
\begin{itemize}
    \item Extension of ModEnv$^\star$ to allow for confined workspaces and derivation of sufficient conditions to successfully obtain a DSW.
    \item A setpoint stabilizing control scheme for collision avoidance with derivation of sufficient conditions for convergence.
    \item A combined path-following and collision avoidance control scheme that integrates a DS motion planning approach with MPC, making the optimal control problem independent of workspace complexity.
\end{itemize}

\section{Preliminaries}
\subsection{Notation}
Let $A=\{A^1,A^2,...\}, A^i\in \mathbb{R}^d$ be a collection of sets. The union and intersection of $A$ are denoted by $A_{\cup}=\bigcup_{A^i\in A}A^i$ and $A_{\cap}=\bigcap_{A^i\in A}A^i$, respectively. If all sets $A^i\in A$ are starshaped, the kernel intersection is denoted by $\textnormal{ker}_{\cap}(A)=\bigcap_{A^i\in A} \textnormal{ker}(A^i)$.
For convenience, an improper use of the Minkowski sum, $\oplus$, will be applied as follows: $A\oplus B = \{A^i\oplus B\}_{\forall A^i\in A}$, given $B\in \mathbb{R}^d$. The closest distance between two sets, $A^1$ and $A^2$, is denoted by $\dist(A^1,A^2)$. $\mathbb{B}(a, b)$ and $\mathbb{B}[a, b]$ are the open and closed balls of radius $b$ centered at $a$, respectively. The line segment from point $x$ to point $y$ is denoted by $l[x,y]$. 
A robot workspace $\mathcal{W}\subset\mathbb{R}^2$ and a collection of obstacles $\mathcal{O} = \{\Obs{1}, \Obs{2}, ...\}$ in $\mathbb{R}^2$ are jointly called the robot environment, denoted by $E=\{\mathcal{W},\mathcal{O}\}$. The corresponding free set is denoted by $\mathcal{F}=\mathcal{W}\setminus\mathcal{O}_{\cup}$.

\subsection{Starshaped Sets and Star Worlds}
\label{sec:starsets}
A set $A$ is \textit{starshaped with respect to} (w.r.t.) $x$ if for every point $y\in A$ the line segment $l[x,y]$ is contained by $A$. The set $A$ is said to be \textit{starshaped} if it is starshaped w.r.t.  some point, i.e. $\exists x$ s.t. $l[x,y] \subset A, \forall y \in A$. The set of all such points is called the \textit{kernel of $A$} and is denoted by $\textnormal{ker}(A)$, i.e. $\textnormal{ker}(A) = \{x\in A : l[x,y] \subset A, \forall y\in A\}$. For any convex set $A$ we have $\textnormal{ker}(A) = A$. 
The set $A$ is \textit{strictly starshaped w.r.t. $x$} if it is starshaped w.r.t. $x$ and any ray emanating from $x$ crosses the boundary only once. We say that $A$ is strictly starshaped if it is strictly starshaped w.r.t. some point.

The robot environment $E=\{\mathcal{W},\mathcal{O}\}$ is said to be a \textit{star world} if all obstacles are strictly starshaped, and the workspace is strictly starshaped or the full Euclidean space. 
A \textit{disjoint star world} (DSW) refers to a star world where all obstacles are mutually disjoint and where any obstacle which is not fully contained in the workspace has a kernel point in the exterior of the workspace, as exemplified in Fig. \ref{fig:dsw_example}. 
For more information on starshaped sets and star worlds, see \cite{hansen_etal_20} and \cite{dahlin_karayiannidis_23_1}.

\subsection{Obstacle Avoidance for Dynamical Systems in Star Worlds}
\label{sec:soads}
Given a star world, $E$, collision avoidance can be achieved using a DS approach \cite{huber_etal_22} with dynamics:
\begin{equation}
    \label{eq:ds_obs_avoidance}
    \dot{r} = \eta(r,r^g,E) = M(r,E)(r^g-r),
\end{equation}
where $r$ is the current robot position and $r^g\in \mathcal{F}$ is the goal position. $M(\cdot)$ is a modulation matrix used to adjust the attracting dynamics to $r^g$ based on the obstacles tangent spaces.
Convergence to $r^g$ is guaranteed for a trajectory following \eqref{eq:ds_obs_avoidance} from any initial position, $r^0\in\mathcal{F}$, if $E$ is a DSW and no obstacle center point is contained by the line segment $l[r^0,r^g]$. 
For more information, see \cite{huber_etal_19,huber_etal_22}.

\section{Problem Formulation}
Consider an autonomous agent with dynamics
\begin{equation}
\label{eq:robot_model}
\begin{split}
    \dot{x}(t) &= f(x(t),u(t))\\
    p(t) &= h(x(t)),
\end{split}
\end{equation}
where $x \in\mathcal{X}\subset \mathbb{R}^n$ is the robot state, $p\in \mathbb{R}^2$ is the robot position and $u\in \mathcal{U}\subset \mathbb{R}^m$ is the control signal.
It is assumed that there exists a control input such that the robot does not move, i.e. $\exists u'\in \mathcal{U} \textnormal{ s.t. } f(x,u') = 0,\ \forall x\in\mathcal{X}$. 
The robot is operating in a dynamic environment, $E(t)=\{\mathcal{W}(t),\mathcal{O}(t)\}$, where each obstacle is either convex or a simple polygon, and the workspace is either strictly starshaped or the full Euclidean plane. The free space - the collision-free robot positions - is then given as $\mathcal{F}(t) = \mathcal{W}(t) \setminus \mathcal{O}_{\cup}(t)$.
\begin{remark}
Although $\mathcal{O}$ formally contains only polygons and convex shapes, the formulation allows for more general complex obstacles as intersections are allowed. In particular, any shape can be described as a combination of several polygon and/or convex regions.
\end{remark}
The objective is to find a control policy that enforces the robot to stay in the free set at all times while driving the robot 1) to a specified goal position, or 2) closely along a predefined path, $\Gamma$. The path $\Gamma$ is a parametrized regular curve
\begin{equation}
\label{eq:Gamma}
    \Gamma = \{p \in \mathbb{R}^2 : \theta \in [0, \theta^g] \rightarrow p = \gamma(\theta)\},
\end{equation}
where the scalar variable $\theta$ is called the path parameter, and $\gamma : \mathbb{R}\rightarrow \mathbb{R}^2$ is a natural parametrization of $\Gamma$.
Formally, the problems are defined as follows:

\begin{problem}[Setpoint stabilization with obstacle avoidance]
\label{problem:setpoint_tracking}
Given the robot dynamics \eqref{eq:robot_model}, the environment $E(t)$, and a goal position $p^g\in \mathbb{R}^2$, design a control scheme that computes $u(t)\in\mathcal{U}$ such that robot stays in the free space at all times and converges to the goal. That is, $p(t)\in \mathcal{F}(t) \forall t$, and $\lim_{t\rightarrow \infty} p(t) = p^g$.
\end{problem}
\begin{problem}[Path-following with obstacle avoidance]
\label{problem:path_following}
Given the robot dynamics \eqref{eq:robot_model}, the environment $E(t)$, and reference path $\Gamma$, design a controller that computes $u(t)\in\mathcal{U}$ and $\theta(t)\in [0, \theta^g]$ such that robot stays in the free space at all times, moves in forward direction along $\Gamma$, and converges to $\gamma(\theta^g)$. 
\end{problem}

In the following sections, we will omit the time notation for convenience unless some ambiguity exists.

\section{Guaranteed DSW Generation}
\label{a:kernel_selection}
The obstacle avoidance approach layed out in Section \ref{sec:setpoint_stabilization} is dependent on ModEnv$^{\star}$ presented in \cite{dahlin_karayiannidis_23_1}. No guarantee for successfully obtaining a DSW was however derived, preventing convergence guarantees to be established for the proposed control scheme. Moreover, the workspace was assumed to be the full Euclidean space, ignoring situations where the workspace is bounded. Here, we extend ModEnv$^{\star}$ to address both these issues. Specifically, the kernel point selection is adjusted according to Algorithm \ref{alg:kernel_point_selection} in Appendix \ref{a:kernel_selection}. 
To declare a sufficient condition for DSW generation, the following definition of a \textit{DSW equivalent} set is established.
\begin{definition}[DSW equivalent]
A star world is DSW equivalent if the set, $Cl$, formed by partitioning $\mathcal{O}$ into mutually disjoint clusters of obstacles, satisfies
\begin{enumerate}[label=\roman*)]
    \item the obstacles in each cluster have intersecting kernels, 
    \begin{equation}
    \label{eq:dsw_kernel}
      \textnormal{ker}_{\cap}(cl) \neq \emptyset,\ \forall cl\in Cl,  
    \end{equation}
    \item any cluster not completely inside the workspace has an intersecting kernel region that does not fall entirely within the workspace, 
    \begin{equation}
    \label{eq:dsw_workspace_kernel}
    cl_{\cup}\not\subset\mathcal{W}\Rightarrow\textnormal{ker}_{\cap}(cl) \not\subset \mathcal{W},\quad \forall cl\in Cl.      \end{equation}
\end{enumerate}
\end{definition}

With the adjusted kernel point selection, a sufficient condition to establish a DSW can be presented as stated in the following theorem.
\begin{theorem}[Guaranteed DSW generation]
\label{theorem:dsw_success}
Consider a DSW equivalent environment with free space $\mathcal{F}$, a robot position, $p\in\mathcal{F}$, and a goal position, $p^g\in\mathcal{F}$. The environment, $\{\mathcal{W},\mathcal{O}^{\star}\}$, resulting from ModEnv$^{\star}$ with kernel point selection as in Algorithm \ref{alg:kernel_point_selection} is a DSW with $\mathcal{O}^{\star}_{\cup}=\mathcal{O}_{\cup}$.
\end{theorem}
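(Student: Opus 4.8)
The plan is to reduce everything to one structural observation that makes the \emph{DSW equivalent} hypothesis bite: a finite collection of sets that share a common kernel point has a starshaped union with that point in its kernel. Concretely, I would first record the auxiliary fact that for any cluster $cl$ and any $x\in\textnormal{ker}_{\cap}(cl)$ the union $cl_{\cup}$ is starshaped w.r.t.\ $x$ — indeed, any $y\in cl_{\cup}$ lies in some member $\Obs{i}\in cl$, and since $x\in\textnormal{ker}(\Obs{i})$ we get $l[x,y]\subset\Obs{i}\subset cl_{\cup}$, hence $x\in\textnormal{ker}(cl_{\cup})$. By condition i) of DSW equivalence such an $x$ exists for every cluster. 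I would also note $\textnormal{ker}_{\cap}(cl)\subset cl_{\cap}\subset\mathcal{O}_{\cup}$, so any point of $\textnormal{ker}_{\cap}(cl)$ is automatically distinct from $p,p^g\in\mathcal{F}=\mathcal{W}\setminus\mathcal{O}_{\cup}$.

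Next I would unfold ModEnv$^{\star}$ and argue that, under the hypothesis, it performs \emph{no} obstacle growth. ModEnv$^{\star}$ groups $\mathcal{O}$ into its coarsest partition into mutually disjoint clusters (obstacles with non-empty pairwise intersection, transitively) — exactly the $Cl$ of the definition — and replaces each cluster by the (strictly) starshaped hull of $cl_{\cup}$ taken w.r.t.\ a kernel point returned by Algorithm~\ref{alg:kernel_point_selection}. The crux is to show this kernel point lies in $\textnormal{ker}_{\cap}(cl)$, and, when $cl_{\cup}\not\subset\mathcal{W}$, in the exterior of $\mathcal{W}$ — which is possible by condition ii). Combined with the auxiliary fact, $cl_{\cup}$ is then already starshaped w.r.t.\ that point, so the hull primitive acts as the identity and $\Obs{\star}$ for that cluster equals $cl_{\cup}$ exactly. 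Since no obstacle is enlarged, no new intersections are created, the re-clustering loop of ModEnv$^{\star}$ terminates after a single pass, and $\mathcal{O}^{\star}_{\cup}=\bigcup_{cl\in Cl}cl_{\cup}=\mathcal{O}_{\cup}$.

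Finally I would verify the three DSW requirements for $\{\mathcal{W},\mathcal{O}^{\star}\}$: strict starshapedness of each $\Obs{\star}=cl_{\cup}$, obtained from the hull construction of ModEnv$^{\star}$ together with selecting the center in the interior of $\textnormal{ker}_{\cap}(cl)$ so that no ray grazes the boundary; mutual disjointness of the $\Obs{\star}$, inherited from disjointness of distinct clusters since no growth occurred; and the workspace clause — $\mathcal{W}$ is strictly starshaped or the full plane by the star-world premise and is left untouched, while any $\Obs{\star}$ with $\Obs{\star}\not\subset\mathcal{W}$ has, by the second paragraph, a kernel point outside $\mathcal{W}$. The hypotheses $p,p^{g}\in\mathcal{F}$ enter only to keep the points manipulated by Algorithm~\ref{alg:kernel_point_selection} clear of the robot and goal, which links the output to the obstacle-avoidance machinery of Section~\ref{sec:soads}.

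I expect the main obstacle to be the careful analysis of Algorithm~\ref{alg:kernel_point_selection}: showing it is well defined on every DSW equivalent environment and that its output simultaneously lies in $\textnormal{ker}_{\cap}(cl)$ and, when required, in the complement of $\mathcal{W}$ — this is precisely where both conditions of DSW equivalence are consumed and where the new confined-workspace handling must be justified. A secondary subtlety is confirming that the hull/inflation primitives of ModEnv$^{\star}$ genuinely restrict to the identity on a set that is already strictly starshaped w.r.t.\ the chosen center, so that $\mathcal{O}^{\star}_{\cup}=\mathcal{O}_{\cup}$ holds exactly and the termination-in-one-pass argument is sound.
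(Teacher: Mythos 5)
Your proposal is correct and follows essentially the same route as the paper's proof: DSW equivalence lets the kernel point selection land in $\textnormal{ker}_{\cap}(cl)$ (and outside $\mathcal{W}$ whenever $cl_{\cup}\not\subset\mathcal{W}$), so the starshaped hull reproduces each cluster union exactly, yielding no obstacle growth, $\mathcal{O}^{\star}_{\cup}=\mathcal{O}_{\cup}$, mutual disjointness, and the workspace-kernel condition. The only difference is presentational: the paper tracks the iterative structure of ModEnv$^{\star}$ (a first pass with singleton clusters returning each obstacle unchanged, then a pass over the merged clusters, invoking Properties 4b and 4d of the prior work for the hull identity and the uniqueness of the partition into disjoint connected subsets), whereas you compress this into a single pass over the connected-component partition and prove the ``common kernel point $\Rightarrow$ starshaped union'' fact directly.
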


\textit{Proof:} See Appendix \ref{a:dsw_success}.

An example of a DSW equivalent scene and the corresponding DSW is shown in Fig. \ref{fig:dsw_example}. 
\def\figscale{0.48}
\begin{figure}[ht!]
    \begin{subfigure}[t]{\figscale\linewidth}
        \includegraphics[width=\linewidth]{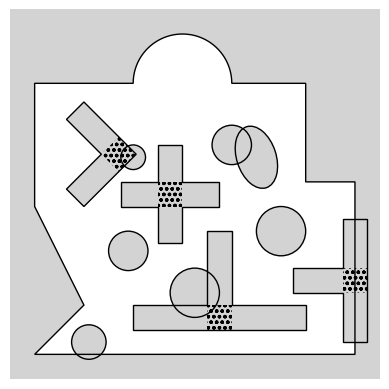}
    \end{subfigure}
    \hfill
    \begin{subfigure}[t]{\figscale\linewidth}
        \includegraphics[width=\linewidth]{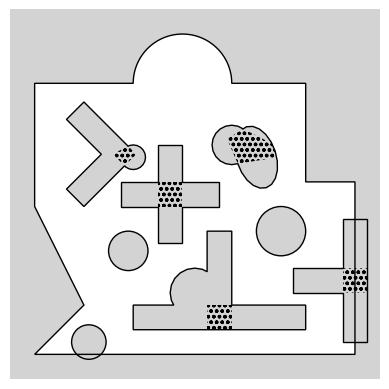}
    \end{subfigure}
    \caption{A DSW equivalent environment (left) and the corresponding DSW (right). The kernels for all non-convex obstacles are shown as dotted .}
    \label{fig:dsw_example}
\end{figure}

\section{Setpoint Stabilization with Obstacle Avoidance}
\label{sec:setpoint_stabilization}
In this section, a control scheme for setpoint stabilization with obstacle avoidance, addressing Problem \ref{problem:setpoint_tracking}, is proposed. The scheme is divided into four main components as illustrated in Fig. \ref{fig:block_scheme}. The environment is modified to form a DSW, $E^{\star}$, where any free point has an appropriately selected minimum clearance, $\rho$, to the obstacles and workspace boundary in $E$ (Section \ref{sec:workspace_modification}). This enables generation of a \textit{receding horizon reference path} (RHRP), $\mathcal{P}$, based on \eqref{eq:ds_obs_avoidance} to ensure collision clearance and convergence to the goal (Section \ref{sec:rhrp}). A control sequence, $z^*$, is computed using an MPC which yields a robot movement along the RHRP within the specified clearance to ensure collision avoidance (Section \ref{sec:mpc}). To provide guaranteed forward motion, initial movement of the reference is enforced in the MPC formulation. As a consequence, there may be occasions where the MPC problem is infeasible for non-holonomic robots. To handle this, a backup control law is formulated (Section \ref{sec:sbc}) and a control law scheduler is defined (Section \ref{sec:scl}) yielding the control policy, $\mu$, applied by the controller over the following sampling period. The complete control scheme is outlined in Section \ref{sec:setpoint_stabilization_scheme} where collision avoidance and convergence properties are also analyzed.

\begin{figure}[ht!]
    \centering
    \resizebox{\linewidth}{!}{
        \begin{tikzpicture}
    \node [draw,
        text width=8em, fill=blue!20, text centered,
        minimum height=5em, rounded corners
    ]  (env_mod) at (0,0) {Environment modification};
    \node [draw,
        text width=8em, fill=blue!20, text centered, 
        minimum height=5em, rounded corners,
        right=5em of env_mod
    ] (rhrp) {Receding horizon\\ reference path};
    \node [draw,
        text width=8em, fill=blue!20, text centered, 
        minimum height=5em, rounded corners,
        right=2em of rhrp
    ] (scl) {Switching control law};
    \node [draw,
        text width=8em, fill=blue!20, text centered, 
        minimum height=5em, rounded corners,
        below=2em of scl
    ] (mpc) {MPC};
    \node [draw,
        text width=5em, fill=white!20, text centered, 
        minimum height=3em, rounded corners,
        below=8em of env_mod
    ]  (robot) {Robot};
    \node [draw,
        text width=5em, fill=brown!20, text centered, 
        minimum height=3em, rounded corners,
        right=5.4em of robot
    ] (controller) {Controller};
    \node [left=2em of env_mod](scene){};
    \node [above=2em of scl.130](O_r0_rg){};
    \node [above=4em of scl](r+_O+){};
    \node [right=1em of scl](pi){};
    \node at (mpc.195 -| robot) (x){};
    \node at (x -| controller) (x_extra){};
    \draw[-stealth] (scene.center) -- (env_mod.west) 
        node[near start,above]{$E, p^g$};
    \draw[-stealth] (env_mod.east) -- (rhrp.west) 
        node[midway,below,text width=5em,text centered](env_mod_out){$E^{\star}, r^0, r^g$};
    \draw[-stealth] (rhrp.east) |- (scl.west) node[near end,above](P){$\mathcal{P}$};
    \draw[-stealth] (mpc.north) -| (scl.south) node[near end,right]{$z^*$};
    \draw[-] (scl.east) -- (pi.center) node[near end, above]{$\mu$};
    \draw[-stealth] (pi.center) |- (controller.east);
    \draw[-stealth] (controller.west) -- (robot.east) node[midway,above]{$u=\mu(x)$};
    \draw[-] (robot.north) -- (x.center) node[near start, left]{$x$};
    \draw[-stealth] (x.center) -- (env_mod.south);
    \draw[-] (x.center) -- (x_extra.center);
    \draw[-stealth] (P.south) |- (mpc.165);
    \draw[-stealth] (x_extra.center) -- (mpc.195);
    \draw[-stealth] (x_extra.center) -- (controller.north);
    \draw[-stealth] (env_mod.300) |- (mpc.west) node[near start,right]{$\rho$};
    \draw[-] (scl.north) -- (r+_O+.center) node[near start, right]{$r^+, \mathcal{O}^+$};
    \draw[-stealth] (r+_O+.center) -| (env_mod.north);
    \draw[-] (env_mod_out.north) |- (O_r0_rg.center);
    \draw[-stealth] (O_r0_rg.center) -- (scl.130);
    \end{tikzpicture}
    }
    \caption{Proposed motion control scheme for setpoint stabilization.}
    \label{fig:block_scheme}
\end{figure}
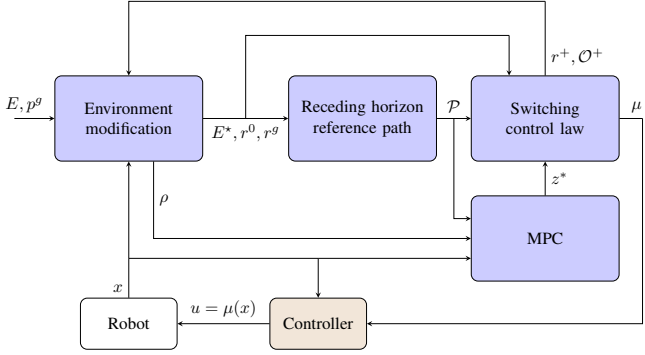

\subsection{Environment Modification}
\label{sec:workspace_modification}
The proposed method relies on generating the RHRP with a (time-varying) minimum clearance, $\rho$, to all obstacles using the DS approach \eqref{eq:ds_obs_avoidance}. To this end, the clearance environment $E^{\rho}=\{\mathcal{W}^{\rho},\mathcal{O}^{\rho}\}$ is defined, where $\mathcal{W}^{\rho} = \mathcal{W} \ominus \mathbb{B}[0, \rho]$ and $\mathcal{O}^{\rho} = \mathcal{O} \oplus \mathbb{B}[0, \rho]$, with corresponding clearance space $\mathcal{F}^{\rho} = \mathcal{W}^{\rho} \setminus \mathcal{O}^{\rho}_{\cup}$. 
As stated in Section \ref{sec:soads}, any star world is positively invariant for the dynamics \eqref{eq:ds_obs_avoidance} and convergence to a goal position is guaranteed for a DSW. 
Since $E^{\rho}$ may include both intersecting and non-starshaped obstacles, it provides none of the aforementioned guarantees. The objective of the environment modification is therefore to find a DSW $E^{\star}=\{\mathcal{W}^{\rho},\mathcal{O}^{\star}\}$ with corresponding free set $\mathcal{F}^{\star}\subset\mathcal{F}^{\rho}$, as well as initial and goal positions, $r^0\in\mathcal{F}^{\star}$ and $r^g\in\mathcal{F}^{\star}$, for the RHRP. A procedure to specify $\rho$ and to compute $E^{\star}$, $r^0$ and $r^g$ is given in Algorithm \ref{alg:obstacle_transformation} and the steps are elaborated below.

\LinesNumbered
\begin{algorithm}[ht!]
\caption{Environment modification}
\label{alg:obstacle_transformation}
\SetKwRepeat{Do}{do}{while}
\SetKwInOut{Input}{Input}
\SetKwInOut{Output}{Output}
\SetKwInOut{Parameters}{Parameters}
\SetKwInOut{Init}{Init}{}{}
\Parameters{$\alpha\in(0,1)$, $\bar{\rho}\in\mathbb{R}^+$}
\Input{$\mathcal{W}$, $\mathcal{O}$, $p^g$, $p$, $r^+$}
\Output{$\mathcal{W}^{\rho}$, $\mathcal{O}^{\star}, r^0, r^g, \rho$}
\Init{$E^+ \gets \{\mathcal{W}^{\bar{\rho}},\emptyset\}$}
    \eIf{$p\in\mathcal{C}^{\bar{\rho}}$}{ \label{l:rho_base_valid}
        $\rho \gets \bar{\rho}$\;
    }{
        $\rho \gets \alpha \dist(\partial\mathcal{F},p)$\;
    }
    $\{\mathcal{W}^{\rho},\mathcal{O}^{\rho}\} \gets \{\mathcal{W} \ominus \mathbb{B}[0,\rho], \mathcal{O} \oplus \mathbb{B}[0,\rho]\}$\; \label{l:environment_rho}
    $\mathcal{P}^0 \gets \mathcal{F}^{\rho} \cap \mathbb{B}[p,\rho]$\; \label{l:init_ref_set}
    $r^0 \gets \argmin_{r^0\in \mathcal{P}^0} \lVert r^0 - r^+ \lVert_2$\; \label{l:r0}
    $r^g \gets \argmin_{r^g\in \mathcal{F}^{\rho}} \lVert r^g - p^g \lVert_2$\; \label{l:rg}
    \eIf{$r^0\in \mathcal{F}^{+}$ \textnormal{\textbf{and}} $r^g\in \mathcal{F}^+$ \textnormal{\textbf{and}} $\mathcal{F}^+ \subset \mathcal{F}^{\rho}$}{ \label{l:starify_start}
        $\mathcal{O}^{\star} \gets \mathcal{O}^+$\;
    }{
        Compute $\mathcal{O}^{\star}$ using ModEnv$^{\star}$ with $\mathcal{W}^{\rho}$, $\mathcal{O}^{\rho}$, $r^0$ and $r^g$ as input\;
    } \label{l:starify_end}
    \ForEach{$\Obs{j} \in \mathcal{O}^{\star}$ \label{l:conv_start}}{
        \If{$CH(\Obs{j}) \cap \left\{r^0 \cup r^g \cup \left(\mathcal{O}^{\star}\setminus\Obs{j}\right)_{\cup} \right\} = \emptyset$}{
            $\Obs{j} \gets CH(\Obs{j})$\;
        }
    } \label{l:conv_end}
    \eIf{$E^{\star}$ \textnormal{is a DSW}\label{l:estar_dsw}}{
        $E^+ \gets E^{\star}$\;
    }{
        $E^+ \gets \{\mathcal{W}^{\bar{\rho}},\emptyset\}$\; \label{l:nominal_e+}
    }
\end{algorithm}

\subsubsection*{Initial and goal reference position selection (lines \ref{l:init_ref_set}-\ref{l:rg})}
The initial reference position, $r^0$, is chosen as the point closest to an input candidate, $r^+$,  within the initial reference set $\mathcal{P}^0 = \mathcal{F}^{\rho} \cap \mathbb{B}[p, \rho]$. In this way, the distance from $r^0$ to any obstacle and workspace boundary is greater than $\rho$, while the distance to the robot is less or equal to $\rho$. As specified in Section \ref{sec:scl}, $r^+$ is appropriately selected along the previously computed RHRP. In particular, $r^+$ is chosen to stimulate a forward shift of the RHRP towards the goal, compared to the previous sampling instance.
The reference goal, $r^g$, is chosen as the point in $\mathcal{F}^{\rho}$ closest to $p^g$.

\subsubsection*{Clearance selection (lines \ref{l:rho_base_valid}-\ref{l:environment_rho})}
To have a valid initial reference position, $\rho$ is set to a strict positive value such that $\mathcal{P}^0$ is nonempty. This is done by utilizing the equivalence 
\begin{equation}
\label{eq:valid_rho}
    p\in \mathcal{C}^{\rho} \Leftrightarrow \mathcal{P}^0 \neq \emptyset,\quad \mathcal{C}^{\rho}=\mathcal{F}^{\rho}\oplus\mathbb{B}[0,\rho].
\end{equation}
For robot positions $p\notin\mathcal{C}^{\bar{\rho}}$, i.e. when the default selection $\rho=\bar{\rho}$ yields $\mathcal{P}^0 = \emptyset$, the clearance is reduced to $\rho=\alpha \dist(\partial\mathcal{F}, p)$ to ensure $p\in\mathcal{F}^{\rho}$ and thus $\mathcal{P}^0 \neq \emptyset$ according to \eqref{eq:valid_rho}. This is a conservative reduction of $\rho$ since larger values could in many cases be used while still obtaining $\mathcal{P}^0\neq\emptyset$. 
The procedure is illustrated Fig. \ref{fig:rho_selection}.

\begin{figure}[ht!]
    \begin{subfigure}[t]{0.36\linewidth}
        \centering
           \includegraphics[height=5cm,trim={0.5cm 0.5cm 0.5cm 0.5cm},clip]{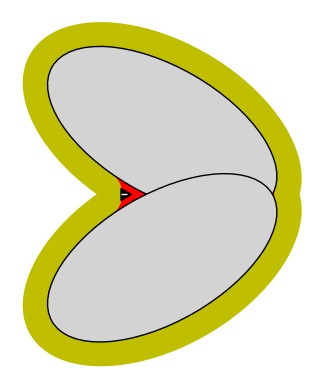}
         \caption{The default clearance $\rho=\bar{\rho}=0.2$ yields $\mathcal{P}^0=\emptyset$ since $p\notin\mathcal{C}^{\bar{\rho}}$.}
         \label{fig:invalid_rho_0}
    \end{subfigure}
    \hfill
    \begin{subfigure}[t]{0.6\linewidth}
        \begin{tikzpicture}[      
                every node/.style={anchor=south west,inner sep=0pt},
                x=1mm, y=1mm,
              ]   
             \node (fig2) at (63,0)
               {\includegraphics[height=5cm,trim={0.5cm 0.5cm 0.5cm 0.5cm},clip]{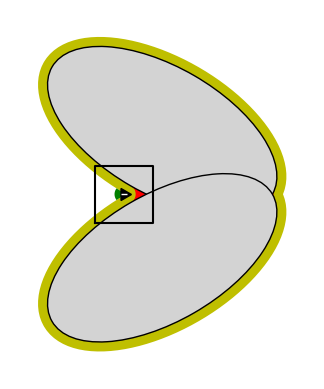}};
               \node (fig4) at (38,30)
               {\includegraphics[width=0.3\linewidth,trim={0.2cm 0.2cm 0.2cm 0.2cm},clip]{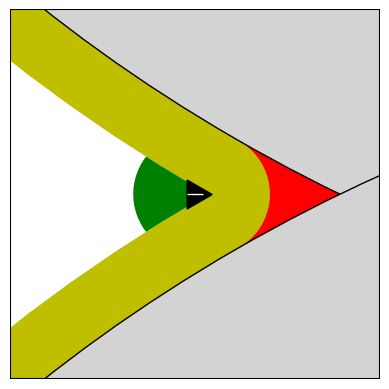}};
               \node (fig5) at (38,0)
               {\includegraphics[width=0.3\linewidth,trim={0.2cm 0.2cm 0.2cm 0.2cm},clip]{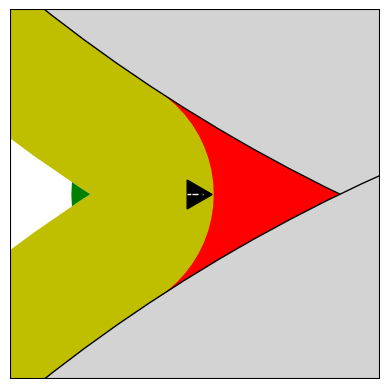}};
            \draw[densely dotted, line width=.8pt] (73.5, 21) -- (fig4.south west);
            \draw[densely dotted, line width=.8pt] (82, 29) -- (fig4.north east);
            \node at (45, 30.5) {\small $\rho=0.075$};
            \node at (46.7, 0.5) {\small $\rho=0.15$};
        \end{tikzpicture}
         \caption{The clearance is reduced to $\rho=\gamma \dist(\partial\mathcal{F},p)$ to ensure $\mathcal{P}^0\neq \emptyset$. The less conservative reduction $\rho=0.15$ would also be a valid selection since this still yields $\mathcal{P}^0\neq\emptyset$.}
         \label{fig:valid_rho_0}
    \end{subfigure}
    \caption{Two obstacles $\mathcal{O}$ (grey) with inflated regions $\mathcal{O}^{\rho}$ (yellow), the initial reference set $\mathcal{P}^0$ (green), the region $\mathcal{F}\setminus\mathcal{C}^{\rho}$ (red) corresponding to robot positions where $\mathcal{P}^0=\emptyset$, and the robot position $p$ (black triangle).}
    \label{fig:rho_selection}
\end{figure}

\subsubsection*{Establishment of a DSW (lines \ref{l:starify_start}-\ref{l:starify_end}, \ref{l:estar_dsw}-\ref{l:nominal_e+})}
To obtain a DSW such that $\mathcal{F}^{\star}\subset\mathcal{F}^{\rho}$, $r^0\in\mathcal{F}^{\star}$ and $r^g\in\mathcal{F}^{\star}$, a first attempt is to use a candidate environment, $E^+$. If this does not satisfy the conditions for $\mathcal{F}^{\star}$, the obstacles are computed using ModEnv$^\star$. 
As specified in lines \ref{l:estar_dsw}-\ref{l:nominal_e+}, $E^+$ is the previously computed $E^{\star}$ if it is a DSW.

\subsubsection*{Convexification (lines \ref{l:conv_start}-\ref{l:conv_end})}
To avoid unnecessary ``detours'' in concave obstacle regions, see \cite{dahlin_karayiannidis_23_2}, the generated obstacles, $\mathcal{O}^{\star}$, are made convex provided that the following conditions are not violated: 1) $r^0$ and $r^g$ remain exterior points of the obstacle, and 2) the resulting obstacle region does not intersect with any other obstacle. Due to these conditions, any DSW $E^{\star}$ remains a DSW also after convexification.

In addition to the revised kernel point selection in Algorithm \ref{alg:kernel_point_selection}, the environment modification is adjusted compared to \cite{dahlin_karayiannidis_23_2} in four points: 1) the clearance, $\rho$, is established through a single-step method, guaranteeing $\mathcal{P}^0\neq\emptyset$, in contrast to employing an iterative approach, 2) the approach applies also for confined workspaces, 3) the selection of the initial reference point, $r^0$, is based on an input candidate, $r^+$, rather than the robot position $p$, and 4) if possible, the environment $E^{\star}$ is reused instead of being recalculated. 
The first adjustment is a pure simplification of the algorithm, the second extends the applicability to bounded workspaces, whereas the two last are instrumental to obtain the convergence properties derived in Section \ref{sec:setpoint_stabilization_scheme}. Moreover, maintaining a constant environment $E^{\star}$ across sampling instances results in increased consistency over time for the vector field used to generate the RHRP. This, coupled with the effort to initialize the RHRP along the previously computed one, facilitates smoother transitions of the path between control sampling instances

\subsection{DS-based Receding Horizon Reference Path}
\label{sec:rhrp}
The RHRP is given as a parametrized regular curve
\begin{equation}
\label{eq:receding_reference_path}
    \mathcal{P} = \left\{r\in \mathbb{R}^2 : s \in [0, L] \rightarrow r(s) \right\}
\end{equation}
with $L=Tw^{\max}$. Here, $T$ is the MPC horizon described in Section \ref{sec:mpc} and $w^{\max} = \max_{u\in \mathcal{U},x\in\mathcal{X}}\lVert \frac{\partial h}{\partial x}(x)f(x,u) \rVert_2$ is the maximum linear speed which can be achieved by the robot. The mapping $r$ is given by the solution to the ODE
\begin{equation}
\label{eq:receding_reference_dynamics}
    \frac{dr(s)}{ds} = \bar{\eta}(r(s),r^g, E^{\star}),\quad r(0) = r^0,
\end{equation}
where $\bar{\eta}(\cdot)=\frac{\eta(\cdot)}{\lVert \eta(\cdot) \rVert_2}$ are the normalized dynamics in \eqref{eq:ds_obs_avoidance}. As the path is initialized in the star world $\mathcal{F}^{\star}$ and the dynamics are positively invariant in any star world, we have $\mathcal{P}\subset \mathcal{F}^{\star}\subset\mathcal{F}^{\rho}$. Thus, the tunnel-region $\mathcal{P}^{\rho}=\mathcal{P}\oplus\mathbb{B}[0, \rho]$ is in the free set, $\mathcal{F}$.

\subsection{Model Predictive Controller}
\label{sec:mpc}
In \cite{dahlin_karayiannidis_23_2}, an MPC is used to compute a control input driving the robot along the RHRP. Whereas collision avoidance is proven, local attractors away from the goal may arise in the workspace depending on control horizon and robot constraints. To improve attracting behavior towards the goal and derive convergence conditions, we here introduce an \textit{enforced initial forward motion} of the reference position resulting in the following MPC.

Adhering the path-following MPC framwork \cite{faulwasser_findeisen_16}, the system state and input are augmented with path coordinate, $s\in [0,L]$, and path speed, $w\in [0,w^{\max}]$, respectively. This embeds the reference trajectory $r(s)$ as part of the optimization problem. The bounds on path variables ensure valid mapping $r(s)$ for all admissible $s$ and that the reference moves in forward direction along $\mathcal{P}$ with a reference speed, $\lVert\dot{r}\rVert_2=\lVert\frac{dr(s)}{ds}w\rVert_2$, less or equal to the maximum linear speed of the robot, $w^{\max}$. Similar to the tunnel-following MPC \cite{vanduijkeren_19}, a constraint is imposed on the tracking error, $\varepsilon(\tau) = \lVert r(s(\tau))-h(\bar{x}(\tau)) \rVert_2$ such that the robot position is in a $\rho$-neighborhood of the reference position\footnote{In contrast to \cite{vanduijkeren_19} we apply strict, and not soft, constraints on the tracking error. This can be done and still ensure existence of solution from the design of the reference path. In particular, since $r(0)\in \mathcal{P}^0\subset \mathbb{B}[p,\rho]$.}.
As standard in the MPC framework, the control variables are piecewise constant over a sampling interval $\Delta t$ and are computed over a horizon $T=N\Delta t$, with $N\in\mathbb{N}^+$.
The optimization problem for the MPC to find  the control sequence, $z=\{\bar{u}_i,\ w_i : i\in[0,1,..,N-1]\}$, is proposed as
\begin{subequations}
\begin{align}
& & &\min_{z} \int_0^T
-c_w w(\tau) + c_e\varepsilon(\tau) d\tau + J(z) \label{eq:mpc_cost}\\
&\text{subject to} & &\nonumber\\
& \tau\in[0,T]:&  & \dot{\bar{x}}(\tau) = f(\bar{x}(\tau),\bar{u}(\tau)),\ \bar{x}(0)=x(t) \label{eq:mpc_robot_dyn} \\
& &  & \dot{s}(\tau) = w,\ s(0)=0, \label{eq:mpc_s_dyn} \\
&  &      & \bar{x}(\tau) \in \mathcal{X},\ \bar{u}(\tau) \in \mathcal{U}, \label{eq:mpc_robot_con}\\
&  &      & s(\tau) \in [0,L],\ w(\tau) \in [0,w^{\max}], \label{eq:mpc_s_con}\\
&  &      & \bar{u}(\tau) = \bar{u}_i,\ w(\tau)=w_i,\ 
 i=\left\lfloor\frac{\tau}{\Delta t}\right\rfloor, \label{eq:mpc_pwc}\\
&  &      & \varepsilon(\tau) \leq \rho,
\label{eq:mpc_error}\\
&  &      & w_0 \geq \frac{\lambda\rho}{\Delta t}. \label{eq:mpc_w0_con}
\end{align}
\label{eq:mpc}
\end{subequations}
Here, $\lfloor \cdot \rfloor$ is the floor function, and the notation $\bar{x}$ and $\bar{u}$ is used to denote the internal variables of the controller and distinguish them from the real system variables. The scalars $c_w>0$, $c_e>0$ and $\lambda\in (0,1)$ are tuning parameters, and $J(z)$ is a regularization term for the control input which can be tailored for the robot at hand, if desired. To ensure that the upper and lower bound on $w_0$ do not conflict, the relationship $\lambda\bar{\rho}\leq w^{\max}\Delta t$ must be satisfied. The inclusion of enforced initial forward motion of the reference position \eqref{eq:mpc_w0_con} is key when deriving the convergence properties in Section \ref{sec:setpoint_stabilization_scheme}.

\subsection{Stabilizing Backup Controller}
\label{sec:sbc}
It can be shown that the MPC problem \eqref{eq:mpc} without constraint \eqref{eq:mpc_w0_con} is feasible at all times by following the proof of Theorem 1 in \cite{dahlin_karayiannidis_23_2}. With the constraint \eqref{eq:mpc_w0_con}, existence of solution is however no longer guaranteed. Consider for instance the example with a non-holonomic robot in Fig. \ref{fig:no_solution} where the robot position is outside the region $\mathbb{B}[r(\lambda\rho),\rho]$. Depending on robot constraints, forcing an initial displacement such that $s(\Delta t)=\lambda\rho$ may lead to $\varepsilon(\Delta t)>\rho$, violating constraint \eqref{eq:mpc_error}. To handle these cases, a fallback strategy is here presented.

\begin{figure}[ht!]
    \centering
    \includegraphics[width=0.5\linewidth]{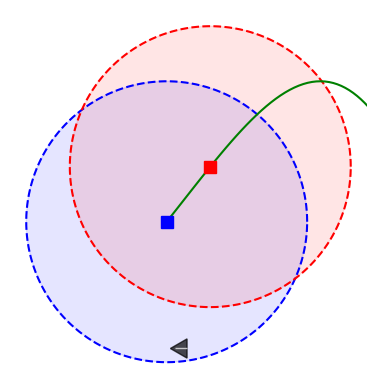}
    \caption{The initial part of $\mathcal{P}$ (green line) is depicted alongside $r^0$ (blue square) and $r(\lambda\rho)$ (red square) with corresponding regions $\mathbb{B}[r^0,\rho]$ (blue) and $\mathbb{B}[r(\lambda\rho),\rho]$ (red). If the robot constraints prohibits movement to achieve $p(\Delta t)\in\mathbb{B}[r(\lambda\rho),\rho]$, no solution to \eqref{eq:mpc} exists.}
    \label{fig:no_solution}
\end{figure}

Let $\mathcal{K} \subset \mathcal{X}\times\mathbb{R}^2 \rightarrow \mathcal{U}$ be a family of control laws such that any $\kappa \in \mathcal{K}$, when applying $u(\cdot)=\kappa(\cdot,r_0)$ in \eqref{eq:robot_model} given $r_0$,
\begin{enumerate}[label=\roman*)]
    \item renders the closed-loop error dynamics $\dot{e}$ asymptotically stable in the origin for the error $e(t)=p(t)-r_0$,
    \item does not allow the error to exceed its initial value, i.e.,\\ $\lVert e(t)\rVert_2 \leq \lVert e(t_0)\rVert_2,\ \forall t \geq t_0$.
\end{enumerate}
We will refer to any $\kappa\in\mathcal{K}$ as a \textit{stabilizing backup controller} (SBC).

\textbf{Unicycle example:}
Obviously, the SBC needs to be designed for the robot at hand, but an example is here presented for a unicycle robot kinematic model
\begin{equation}
\label{eq:unicycle}
    f(x,u) = \begin{bmatrix}v\cos\psi\\
    v\sin\psi\\
    \omega\end{bmatrix},\quad h(x) = \begin{bmatrix}p^x\\p^y\end{bmatrix},
\end{equation}
where $x=[p^x,p^y,\psi]^T$ are the Cartesian position and orientation of the robot, and $u=[v\ \omega]^T$ are the linear and angular velocities. The controller 
\begin{equation}
\label{eq:unicycle_kappa}
    \kappa(x,r^0) = \begin{bmatrix}
        v\\
        \omega
    \end{bmatrix} = \begin{bmatrix}
        -k_1\left(e^x\cos\psi + e^y\sin\psi\right)\\ 
        k_2 e_{\psi}
    \end{bmatrix},
\end{equation}
with $e_{\psi}=\left(\textnormal{atan2}(e^y, e^x) - \psi + \pi \right)$, $k_1>0$ and $k_2>0$ then satisfies the conditions for being an SBC given $\mathcal{U}=\mathbb{R}^2$. This can be derived using Barbalat's lemma with the positive semi-definite function $V=e^Te$ which has a negative semi-definite derivative, $\dot{V}=-k_1\left(e^x\cos\psi + e^y\sin\psi\right)^2$, under control law \eqref{eq:unicycle_kappa} \cite{siciliano_etal_08}.
To handle saturated controllers as well, upper bounds for $k_1$ and $k_2$ can be found to ensure $\kappa(x,r^0)\in\mathcal{U}$ by utilizing the fact $r^0\in\mathbb{B}[p,\rho]\subset\mathbb{B}[p,\bar{\rho}]$ and thus $\lVert e\rVert_2\leq \bar{\rho}$. Consider $\mathcal{U}=[v^{\min}, v^{\max}] \times [-\omega^{\max}, \omega^{\max}]$ with $v^{\min}<0<v^{\max}$ and $\omega^{\max}>0$. We have $\lvert v\rvert \leq k_1\left(\lvert e^x\rvert + \lvert e^y\rvert\right) \leq 2k_1\lVert e\rVert_2 \leq 2k_1\bar{\rho}$, and the upper bound $k_1\leq \frac{\min(-v^{\min},v^{\max})}{2\bar{\rho}}$ yields $v\in[v^{\min},v^{\max}]$. Assuming $e_{\psi}$ is evolving such that $e_{\psi}\in(-\pi,\pi]$, we have $\lvert \omega\rvert \leq k_2\pi$ and the upper bound $k_2\leq \frac{\omega^{\max}}{\pi}$ yields $\omega \in[-\omega^{\max},\omega^{\max}]$.

\subsection{Control Law Scheduler}
\label{sec:scl}
The control law $\mu$ is updated by a control law scheduler at a sampling interval $\Delta t$, such that it is constant over each period $t\in [t_k, t_{k+1})$ with $t_k=k\Delta t$, $k\in\mathbb{N}$. The control law switches between two modes, depending on feasibility of \eqref{eq:mpc} and if the RHRP is a singleton set\footnote{$r^0=r^g$ implies that the RHRP dynamics \eqref{eq:receding_reference_dynamics} are $\frac{dr}{ds}=0$ such that the RHRP is the singleton set $\mathcal{P}=\{r^g\}$.}, defined as
\begin{equation}
\begin{split}
    \textbf{MPC MODE}&: r^0\neq r^g \textnormal{ and \eqref{eq:mpc} is feasible}\\
    \textbf{SBC MODE}&: \textnormal{otherwise}.
\end{split}
\end{equation}
The control law scheduler determines the control law and the next initial reference position candidate according to the logic
\begin{align}
    \mu(\cdot) &= \begin{cases}
        \bar{u}^*_0, & \textbf{MPC MODE}, \\
        \kappa(\cdot, r_0), & \textbf{SBC MODE},
    \end{cases} 
    \label{eq:scl} \\
    r^+ &= \begin{cases}
        r(w^*_0\Delta t), & \textbf{MPC MODE}, \\
        r_0, & \textbf{SBC MODE},
    \end{cases}
    \label{eq:r_plus}
\end{align}
where $\bar{u}^{*}_{0}$ and $w^{*}_{0}$ are extracted from the optimal solution, $z^*$, of \eqref{eq:mpc}. The control input is hence constant over a sampling interval when \textbf{MPC MODE} is active while the feedback controller $\kappa$ is applied with $r^0$ as setpoint when \textbf{SBC MODE} is active. 
The initial reference candidate is specified along the RHRP. When in \textbf{MPC MODE}, a solution to the MPC problem exists and $r^+$ is chosen as the 1-step predicted reference position of the MPC solution. This encourages forward shift of the RHRP at the next sampling instance, while ensuring $r^+$ to stay in a $\rho$-neighborhood of the robot due to \eqref{eq:mpc_error}. When in \textbf{SBC MODE}, the control target is to realign the robot configuration to enable MPC feasibility at future sampling instances, and the next initial reference candidate is chosen as the current initial reference, suggesting no forward shift of the RHRP.

\subsection{Motion Control Scheme}
\label{sec:setpoint_stabilization_scheme}
The complete control scheme for setpoint stabilization is outlined in Algorithm \ref{alg:setpoint_control}. Although no information about the environment is used in the MPC formulation nor for the SBC, collision avoidance is achieved under the following assumptions as stated below by Theorem \ref{theorem:collision_avoidance}. This is obtained by ensuring a close tracking (with error less than $\rho$) of the path which is at least at a distance $\rho$ from any obstacle and workspace boundary.

\LinesNumbered
\begin{algorithm}
    \caption{Setpoint control scheme}
    \label{alg:setpoint_control}
    \SetKwRepeat{Do}{do}{while}
    \SetKwInOut{Input}{Input}
    \SetKwInOut{Output}{Output}
    \SetKwInOut{Parameters}{Parameters}
    \SetKwInOut{Init}{Init}
    \Parameters{$\bar{\rho}\in\mathbb{R}^+$, $\alpha\in(0,1)$, $\Delta t\in\mathbb{R}^+$, $N\in\mathbb{N}^+$, $\lambda\in(0,1)$, $c_w\in\mathbb{R}^+$, $c_e\in\mathbb{R}^+$, $\kappa\in\mathcal{K}$}
    \Input{$E$, $p^g$, $x$}
    \Output{$\mu(\cdot)$}
    \Init{$r^+ \gets p$}
    Compute $E^{\star},r^0,r^g,\rho$ using Algorithm \ref{alg:obstacle_transformation}\;
    Compute $\mathcal{P}$ according to \eqref{eq:receding_reference_path}-\eqref{eq:receding_reference_dynamics}\;
    Run solver for \eqref{eq:mpc}\;
    Update $\mu$ and $r^+$ according to \eqref{eq:scl}-\eqref{eq:r_plus}\;
\end{algorithm}

\begin{assumption}
\label{ass:slow_obstacles}
The obstacles move slow compared to the sampling frequency such that the obstacle positions are constant over a sampling period, i.e. $\mathcal{F}(t) = \mathcal{F}(t_k),\ \forall t\in[t_k, t_{k+1})$.
\end{assumption}
\begin{assumption}
\label{ass:not_aggressive_obstacles}
The obstacles do not actively move into a region occupied by the robot, such that the implication $p(t_k)\in\mathcal{F}(t_{k-1})
\Rightarrow p(t_k)\in\mathcal{F}(t_k)$ holds.
\end{assumption}

\begin{theorem}[Collision avoidance]
\label{theorem:collision_avoidance}
The trajectory for a robot with dynamics \eqref{eq:robot_model} and initial position $p(t_0)\in\mathcal{F}(t_0)$ following the motion control scheme in Algorithm \ref{alg:setpoint_control} is collision-free, i.e. $p(t) \in \mathcal{F}(t), \forall t\geq t_0$, if Assumptions \ref{ass:slow_obstacles}-\ref{ass:not_aggressive_obstacles} hold.
\end{theorem}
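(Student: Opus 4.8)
The plan is to argue inductively over sampling intervals that the robot position stays collision-free, leveraging the $\rho$-clearance structure of the reference path together with the tracking-error bound. The key structural facts established earlier are: (i) the RHRP satisfies $\mathcal{P}\subset\mathcal{F}^{\star}\subset\mathcal{F}^{\rho}$, so the tunnel $\mathcal{P}^{\rho}=\mathcal{P}\oplus\mathbb{B}[0,\rho]$ lies in the free set $\mathcal{F}$ of the \emph{current} environment $E(t_k)$; (ii) in \textbf{MPC MODE} the MPC constraint \eqref{eq:mpc_error} enforces $\varepsilon(\tau)\le\rho$ for all $\tau\in[0,T]$, so the predicted robot position remains in the tunnel; (iii) in \textbf{SBC MODE} the SBC property (ii) gives $\lVert p(t)-r^0\rVert_2\le\lVert p(t_k)-r^0\rVert_2$, and by construction $r^0\in\mathcal{P}^0\subset\mathbb{B}[p(t_k),\rho]$, so again the robot stays within $\mathbb{B}[r^0,\rho]\subset\mathcal{P}^{\rho}\subset\mathcal{F}(t_k)$.

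The induction hypothesis at step $k$ is $p(t_k)\in\mathcal{F}(t_k)$. Under Assumption~\ref{ass:slow_obstacles} the environment is frozen over $[t_k,t_{k+1})$, so the quantities $E^{\star},\rho,\mathcal{P}$ computed by Algorithm~\ref{alg:obstacle_transformation} at $t_k$ describe a tunnel $\mathcal{P}^{\rho}\subset\mathcal{F}(t_k)=\mathcal{F}(t)$ for all $t$ in that interval. I would first verify that the actual system trajectory generated by the applied control $\mu$ coincides with the internal predicted trajectory of \eqref{eq:mpc_robot_dyn} over $[t_k,t_{k+1})$ in \textbf{MPC MODE} (since $\bar{x}(0)=x(t_k)$, the dynamics match, and $\mu=\bar u_0^*$ is the first piecewise-constant segment), hence $\varepsilon(\tau)\le\rho$ transfers to the real trajectory and yields $p(t)\in\mathcal{P}^{\rho}\subset\mathcal{F}(t_k)$; in particular $p(t)\in\mathcal{F}(t)$ for $t\in[t_k,t_{k+1})$ and, taking the limit, $p(t_{k+1})\in\mathcal{F}(t_k)$. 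In \textbf{SBC MODE} I would invoke SBC property (ii) plus $r^0\in\mathbb{B}[p(t_k),\rho]$ to conclude the same containment $p(t)\in\mathbb{B}[r^0,\rho]$; one must check $r^0$ used as the SBC setpoint is exactly the $r^0$ from the current Algorithm~\ref{alg:obstacle_transformation} call, so that $\mathbb{B}[r^0,\rho]\subset\mathcal{F}^{\star}\oplus\ldots\subset\mathcal{F}(t_k)$ still applies. Either way $p(t_{k+1})\in\mathcal{F}(t_k)$, and Assumption~\ref{ass:not_aggressive_obstacles} upgrades this to $p(t_{k+1})\in\mathcal{F}(t_{k+1})$, closing the induction. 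The base case is the hypothesis $p(t_0)\in\mathcal{F}(t_0)$.

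There is a subtlety at $t_k$ itself that needs care: to even \emph{run} Algorithm~\ref{alg:obstacle_transformation} and obtain a valid nonempty $\mathcal{P}^0$, we need $p(t_k)\in\mathcal{F}(t_k)$ (so the clearance selection branch gives a strictly positive $\rho$ with $p\in\mathcal{F}^{\rho}$, hence $\mathcal{P}^0\ne\emptyset$ by \eqref{eq:valid_rho}); the induction hypothesis supplies exactly this. I would also note that whenever the MPC is infeasible or $r^0=r^g$, the scheduler \eqref{eq:scl} falls back to \textbf{SBC MODE}, so the case analysis is exhaustive and there is always a well-defined $\mu$ keeping the robot in the tunnel.

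\textbf{Main obstacle.} The delicate point is not the geometry of the tunnel — that is essentially inherited from Section~\ref{sec:rhrp} — but the bookkeeping across the mode switch and across sampling boundaries: ensuring that the reference set $\mathcal{P}^0\subset\mathbb{B}[p(t_k),\rho]$ is genuinely nonempty and that the $r^0$ fed to the SBC is the one consistent with the clearance $\rho$ used to build $\mathcal{P}^{\rho}$, so that in both modes the real (not merely predicted) trajectory provably lies in $\mathcal{F}(t_k)$; and then correctly chaining $p(t_{k+1})\in\mathcal{F}(t_k)\Rightarrow p(t_{k+1})\in\mathcal{F}(t_{k+1})$ via Assumption~\ref{ass:not_aggressive_obstacles} to restart the induction. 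I expect the argument for continuity at the closed endpoint $t_{k+1}$ (passing from the open interval containment to containment at $t_{k+1}$) to require a brief limiting argument using closedness of $\mathcal{F}(t_k)$, or equivalently stating the tunnel containment on the closed interval $[t_k,t_{k+1}]$ directly.
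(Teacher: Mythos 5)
Your proposal matches the paper's own proof essentially step for step: induction over sampling intervals reducing the claim to $p(t)\in\mathcal{F}(t_k)$ on $(t_k,t_{k+1}]$, the case split between \textbf{MPC MODE} (tracking-error constraint \eqref{eq:mpc_error} plus the coincidence of the real and predicted trajectories keeps $p(t)$ in the tunnel $\mathcal{P}^{\rho}(t_k)\subset\mathcal{F}(t_k)$) and \textbf{SBC MODE} (non-increasing error of the SBC with $r^0\in\mathbb{B}[p(t_k),\rho]$ gives $p(t)\in\mathbb{B}[r^0,\rho]\subset\mathcal{P}^{\rho}(t_k)$), and the final chaining through Assumptions \ref{ass:slow_obstacles} and \ref{ass:not_aggressive_obstacles} to restart the induction at $t_{k+1}$. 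The argument is correct and adds only minor bookkeeping (real-vs-predicted trajectory identification, nonemptiness of $\mathcal{P}^0$) that the paper leaves implicit, so no further comparison is needed.
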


\textit{Proof:} See Appendix \ref{a:proof_collision_avoidance}.

While convergence properties cannot be stated for generic scenarios with dynamic obstacles (consider the case with iteratively opening and closing of two separated gaps in a room), it can be stated under the following assumptions. 

\begin{assumption}
\label{ass:static_obstacles}
There exists a time instance after which the environment is static, i.e. $\exists k^s\in\mathbb{N}\setminus\infty$ s.t. $E(t)=E_s=\{\mathcal{W}_s,\mathcal{O}_s\}, \forall t\geq t_{k^s}$.
\end{assumption}
\begin{assumption}
\label{ass:goal_distance}
The workspace boundary and all obstacles after time $t_{k^s}$ are at least at a distance $\bar{\rho}$ from the goal, i.e. $\dist(p^g,\{\partial\mathcal{W}_s,\mathcal{O}_s\}) \geq \bar{\rho}$.
\end{assumption}
Without loss of generality, we will in the following assume $k^s=0$.
Note that Ass. \ref{ass:slow_obstacles}-\ref{ass:static_obstacles} trivially hold for a static scene and Ass. \ref{ass:goal_distance} can easily be obtained by adjustment of $\bar{\rho}$ if $p^g\in\mathcal{F}_s$.
Under Ass. \ref{ass:slow_obstacles}-\ref{ass:goal_distance} the proposed motion control scheme provide guaranteed convergence from the set $\mathcal{C}_s^{\bar{\rho}} = \mathcal{C}^{\bar{\rho}}(t_0)$ given by \eqref{eq:valid_rho} as stated by the following proposition.  
\begin{proposition}[Convergence to goal by successful DSW generation]
\label{theorem:convergence}
The trajectory for a robot with dynamics \eqref{eq:robot_model} following the motion control scheme in Algorithm \ref{alg:setpoint_control} converges to $p^g$ from any position $p(t_0)\in\mathcal{C}^{\bar{\rho}}_s$ if $E^{\star}(t_0)$ is a DSW and Assumptions \ref{ass:slow_obstacles}-\ref{ass:goal_distance} hold.
\end{proposition}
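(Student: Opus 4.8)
The plan is to establish convergence in two stages: first show that after finitely many sampling instances the control scheme settles into a "nominal" regime in which the environment modification produces a fixed DSW $E^\star$ with a fixed clearance $\rho=\bar\rho$, and second, within that regime, argue that the RHRP endpoint $r^g$ coincides with (a point arbitrarily close to) $p^g$ and that the robot is driven monotonically along the RHRP to $r^g$. The key structural facts I would invoke are: (i) by Assumption~\ref{ass:goal_distance}, $p^g\in\mathcal{F}^{\bar\rho}_s$, so once $\rho=\bar\rho$ the reference goal selected in line~\ref{l:rg} of Algorithm~\ref{alg:obstacle_transformation} satisfies $r^g=p^g$; (ii) Theorem~\ref{theorem:dsw_success} (or the hypothesis that $E^\star(t_0)$ is a DSW together with the reuse mechanism in lines~\ref{l:estar_dsw}--\ref{l:nominal_e+} and lines~\ref{l:starify_start}--\ref{l:starify_end}) guarantees that once a DSW is obtained it is retained at subsequent instances, since $E^+$ is set to the previous $E^\star$ and is accepted whenever $r^0,r^g\in\mathcal{F}^+$ and $\mathcal{F}^+\subset\mathcal{F}^\rho$; (iii) the convergence property of the DS~\eqref{eq:ds_obs_avoidance} from Section~\ref{sec:soads}, which gives that the integral curve~\eqref{eq:receding_reference_dynamics} reaches $r^g$ in finite arclength provided no obstacle center lies on $l[r^0,r^g]$.

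First I would show $p(t_0)\in\mathcal{C}^{\bar\rho}_s$ implies the scheme starts in the nominal regime: $p\in\mathcal{C}^{\bar\rho}$ triggers the branch $\rho\gets\bar\rho$ in Algorithm~\ref{alg:obstacle_transformation}, hence $\mathcal{P}^0\neq\emptyset$ by~\eqref{eq:valid_rho}, $r^0$ is well defined, and $r^g=p^g$ by Assumption~\ref{ass:goal_distance}. Since $E^\star(t_0)$ is a DSW by hypothesis, Theorem~\ref{theorem:collision_avoidance} ensures $p(t)\in\mathcal{F}(t)$ for all $t$, so the robot never leaves the free set; I would then argue inductively that $p(t_k)\in\mathcal{C}^{\bar\rho}_s$ for all $k$ — this should follow because the MPC error constraint~\eqref{eq:mpc_error} keeps $p$ within $\rho$ of the RHRP which lies in $\mathcal{F}^{\star}\subset\mathcal{F}^{\rho}$, and in SBC mode condition~(ii) of the SBC definition keeps $\lVert p-r_0\rVert_2$ nonincreasing so the robot stays within $\bar\rho$ of a point of $\mathcal{F}^{\bar\rho}$, i.e. in $\mathcal{C}^{\bar\rho}$. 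Consequently at every instance $\rho=\bar\rho$, $E^\star$ is a DSW (reused), and $r^g=p^g$.

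Next I would handle the progression of the RHRP. The key monotonicity argument: in MPC mode, $w^*_0\ge\lambda\rho/\Delta t>0$ by~\eqref{eq:mpc_w0_con}, so $r^+=r(w^*_0\Delta t)$ advances along the RHRP by at least arclength $\lambda\bar\rho$; since $r^0$ at the next instance is chosen closest to $r^+$ within $\mathcal{P}^0$, and since $E^\star$ and $r^g$ are unchanged, the new RHRP is (a truncation/continuation of) the same integral curve shifted forward, so the residual arclength from $r^0$ to $r^g$ strictly decreases by a uniform amount each MPC step. I would then argue the scheme cannot remain in SBC mode forever: SBC mode realigns the robot toward $r^0$ (the current RHRP start), and because the error is driven asymptotically to zero (SBC property (i)), after finite time the robot enters $\mathbb{B}[r(\lambda\rho),\rho]$ so that forcing $s(\Delta t)=\lambda\rho$ no longer violates~\eqref{eq:mpc_error} — hence MPC mode becomes feasible again (using the feasibility of~\eqref{eq:mpc} without~\eqref{eq:mpc_w0_con}, cf. Section~\ref{sec:sbc}). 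Therefore MPC steps occur infinitely often, the residual arclength shrinks to zero, and $r^0\to r^g=p^g$; finally, since the tracking error is bounded by $\rho=\bar\rho$ in MPC mode and goes to zero when the RHRP collapses to the singleton $\{r^g\}$ (then SBC mode with setpoint $r^0=r^g=p^g$ is active and its asymptotic stability gives $p\to p^g$), we conclude $\lim_{t\to\infty}p(t)=p^g$.

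The main obstacle I expect is the bookkeeping around the DS convergence hypothesis "no obstacle center on $l[r^0,r^g]$": as $r^0$ shifts forward along the integral curve one must verify this genuinely non-degenerate condition is preserved (or that the finite-arclength reachability of $r^g$ along~\eqref{eq:receding_reference_dynamics} holds regardless, since the path is an integral curve of a field that already steers around obstacle centers), and one must rule out the RHRP endpoint $r(L)$ stalling short of $r^g$ forever — i.e. showing the uniform per-step arclength gain $\lambda\bar\rho$ actually translates into the integral curve's terminal point approaching $r^g$ rather than the path merely "sliding" without the far end progressing. Making the "same curve, shifted start" claim rigorous — that $r^0_{k+1}$ lies on the curve generated at step $k$ and the two integral curves agree on their overlap because $E^\star$ and $r^g$ are fixed — is the crux, and it relies essentially on the constant-environment reuse that Algorithm~\ref{alg:obstacle_transformation} was designed to provide.
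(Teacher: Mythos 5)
Your plan follows essentially the same route as the paper's proof: it establishes that $\rho$ stays at $\bar{\rho}$ and $E^{\star}$ is reused unchanged, that $r^0$ advances along the fixed integral curve of \eqref{eq:receding_reference_dynamics} by at least $\lambda\bar{\rho}$ per \textbf{MPC MODE} step, that \textbf{SBC MODE} cannot persist indefinitely because realignment toward $r^0$ eventually makes the stationary-robot candidate solution (reference advanced by $\lambda\bar{\rho}$, then halted) feasible, and that the endgame is handled by the SBC with setpoint $p^g$. The ``same curve, shifted start'' crux you flag is resolved exactly as you anticipate (the paper shows $r^+_k\in\mathcal{P}^0_{k+1}$ so that $r^0_{k+1}=r^+_k$ lies on the previous curve), so the proposal is correct and matches the paper's argument.
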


\textit{Proof:} See Appendix \ref{a:proof_convergence}.

The convergence in Proposition \ref{theorem:convergence} is dependent on a successful environment modification at time $t_0$. However, Theorem \ref{theorem:dsw_success} can be used to declare an a priori sufficient condition for convergence as stated below.
\begin{theorem}[Convergence to goal]
\label{theorem:convergence_apriori}
The trajectory for a robot with dynamics \eqref{eq:robot_model} following the motion control scheme in Algorithm \ref{alg:setpoint_control} converges to $p^g$ from any position $p(t_0)\in\mathcal{C}^{\bar{\rho}}_s$ if $E^{\bar{\rho}}_s$ is DSW equivalent and Assumptions \ref{ass:slow_obstacles}-\ref{ass:goal_distance} hold.
\end{theorem}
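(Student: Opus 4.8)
The plan is to reduce the statement to Proposition~\ref{theorem:convergence} by showing that the hypothesis ``$E^{\bar\rho}_s$ is DSW equivalent'' forces the environment modification carried out at time $t_0$ to return a DSW. Adopting the convention $k^s=0$ (so $E_s=E(t_0)$ and $\mathcal{C}^{\bar\rho}_s=\mathcal{C}^{\bar\rho}(t_0)$), the first step is to identify the clearance that Algorithm~\ref{alg:obstacle_transformation} uses at $t_0$: since $p(t_0)\in\mathcal{C}^{\bar\rho}_s$, the equivalence \eqref{eq:valid_rho} with $\rho=\bar\rho$ gives $\mathcal{P}^0\neq\emptyset$, so the test on line~\ref{l:rho_base_valid} holds and $\rho$ is set to $\bar\rho$. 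Hence the clearance environment built on line~\ref{l:environment_rho} is precisely $\{\mathcal{W}^{\bar\rho},\mathcal{O}^{\bar\rho}\}=E^{\bar\rho}_s$, which is DSW equivalent by assumption; in particular $\mathcal{F}^{\bar\rho}_s\neq\emptyset$, so the minimizations on lines~\ref{l:r0} and~\ref{l:rg} are well posed and produce $r^0\in\mathcal{P}^0\subset\mathcal{F}^{\bar\rho}_s$ and $r^g\in\mathcal{F}^{\bar\rho}_s$.

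Next I would verify that the modification actually runs ModEnv$^{\star}$ on this environment. At $t_0$, Algorithm~\ref{alg:setpoint_control} initialises $r^+=p$, so the candidate environment on line~\ref{l:starify_start} is $E^+=\{\mathcal{W}^{\bar\rho},\emptyset\}$, whose free set $\mathcal{W}^{\bar\rho}$ is not contained in $\mathcal{F}^{\bar\rho}_s$ as soon as some inflated obstacle meets the inflated workspace; the condition $\mathcal{F}^+\subset\mathcal{F}^{\rho}$ then fails and $\mathcal{O}^{\star}$ is genuinely computed by ModEnv$^{\star}$ from the inputs $\mathcal{W}^{\bar\rho},\mathcal{O}^{\bar\rho},r^0,r^g$ with the kernel selection of Algorithm~\ref{alg:kernel_point_selection}. (In the complementary situation, where no inflated obstacle meets $\mathcal{W}^{\bar\rho}$, the claim is immediate: $\{\mathcal{W}^{\bar\rho},\emptyset\}$ is already a DSW, since its workspace is strictly starshaped or the full plane by the DSW-equivalent hypothesis.) I can then invoke Theorem~\ref{theorem:dsw_success} with the DSW equivalent environment $\{\mathcal{W}^{\bar\rho},\mathcal{O}^{\bar\rho}\}$, taking $r^0$ in the role of the robot position and $r^g$ in the role of the goal position, to conclude that $\{\mathcal{W}^{\bar\rho},\mathcal{O}^{\star}\}$ is a DSW. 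Finally, the convexification loop on lines~\ref{l:conv_start}--\ref{l:conv_end} replaces an obstacle by its convex hull only when $r^0$ and $r^g$ remain exterior and no new intersection is created, which (as noted in Section~\ref{sec:workspace_modification}) preserves the DSW property; hence $E^{\star}(t_0)$ is a DSW.

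With $E^{\star}(t_0)$ a DSW and Assumptions~\ref{ass:slow_obstacles}--\ref{ass:goal_distance} in force, Proposition~\ref{theorem:convergence} applies directly and yields $\lim_{t\to\infty}p(t)=p^g$ for every initial position $p(t_0)\in\mathcal{C}^{\bar\rho}_s$, which is the assertion; thus the whole argument is a chaining of Theorem~\ref{theorem:dsw_success} with Proposition~\ref{theorem:convergence}. The step I expect to be the main obstacle is the bookkeeping in the second paragraph: confirming that at $t_0$ the modification routes through the ModEnv$^{\star}$ branch rather than reusing a previously stored $E^{\star}$, and that the points named $r^0$ and $r^g$ really meet the admissibility hypotheses of Theorem~\ref{theorem:dsw_success}. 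Everything beyond that is a direct appeal to the two cited results.
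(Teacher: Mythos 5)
Your proposal is correct and follows essentially the same route as the paper's own proof: use $p(t_0)\in\mathcal{C}^{\bar{\rho}}_s$ to conclude $\rho=\bar{\rho}$, note that $E^{\rho}(t_0)=E^{\bar{\rho}}_s$ is DSW equivalent with $r^0,r^g\in\mathcal{F}^{\rho}(t_0)$, invoke Theorem~\ref{theorem:dsw_success} to get that $E^{\star}(t_0)$ is a DSW, and then apply Proposition~\ref{theorem:convergence}. The extra bookkeeping you do (which branch of Algorithm~\ref{alg:obstacle_transformation} is taken at $t_0$, the trivial obstacle-free case, and DSW preservation under convexification) is sound and merely makes explicit details the paper leaves implicit.
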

\begin{proof}
Since $p(t_0)\in \mathcal{C}^{\bar{\rho}}(t_0)$, Algorithm \ref{alg:obstacle_transformation} yields $\rho=\bar{\rho}$. Then $E^{\rho}(t_0)=E^{\bar{\rho}}_s$ is DSW equivalent. By design we have $r^0(t_0)\in\mathcal{F}^{\rho}(t_0)$ and $r^g(t_0)\in\mathcal{F}^{\rho}(t_0)$. From Theorem \ref{theorem:dsw_success} it can then be concluded that $E^{\star}(t_0)$ is a DSW and convergence to $p^g$ follows from Proposition \ref{theorem:convergence}.
\end{proof}

The positions from where convergence is not guaranteed, $\mathcal{F}\setminus\mathcal{C}^{\bar{\rho}}_s$, appear in the neighborhood of obstacle intersections, narrow passages, and in the neighborhood of concave obstacle vertices as seen in Figs. \ref{fig:rho_selection} and \ref{fig:static_setpoint}.

\section{Path-following with Obstacle Avoidance}
\label{sec:path_following_controller}
In this section, a path-following controller with obstacle avoidance is proposed, addressing Problem \ref{problem:path_following}. The control scheme closely follows the approach for setpoint stabilization presented in Section \ref{sec:setpoint_stabilization} with two additional components. Firstly, an alternative mapping for the RHRP is defined to follow the reference path $\Gamma$. We will refer to this as the \textit{nominal RHRP}. Secondly, an update policy is defined for the path parameter, $\theta$, such that $\gamma(\theta)$ moves in forward direction along $\Gamma$. To accomplish collision avoidance within the path-following scheme, the controller operate in two modes: \textit{nominal path mode} and \textit{collision avoidance mode}. The nominal path mode is active when the nominal RHRP has a clearance $\bar{\rho}$ to the environment, whereas the collision avoidance mode is active otherwise.
The complete path-following scheme is given in Algorithm \ref{alg:path_following_control}, and the details of the added components are described below.

\LinesNumbered
\begin{algorithm}[ht!]
    \caption{Path-following control scheme}
    \label{alg:path_following_control}
    \SetKwRepeat{Do}{do}{while}
    \SetKwInOut{Input}{Input}
    \SetKwInOut{Output}{Output}
    \SetKwInOut{Parameters}{Parameters}
    \SetKwInOut{Init}{Init}
    \Parameters{$\bar{\rho}\in\mathbb{R}^+$, $\alpha\in(0,1)$, $\Delta t\in\mathbb{R}^+$, $N\in\mathbb{N}^+$, $\lambda\in(0,1)$, $c_w\in\mathbb{R}^+$, $c_e\in\mathbb{R}^+$, $\kappa\in\mathcal{K}$, $\Gamma$, $L^{\textnormal{nom}}\in\mathbb{R}^+$}
    \Input{$E$, $x$}
    \Output{$\mu(\cdot)$}
    \Init{$r^+ \gets p$, $\theta \gets 0$}
    $\rho \gets \bar{\rho}$\;
    Compute $\mathcal{P}^{\textnormal{nom}}$ according to \eqref{eq:nominal_rhrp} and \eqref{eq:nominal_rhrp_mapping}\;\label{l:nominal_rhrp}
    \eIf{$\mathcal{P}^{\textnormal{nom}}\subset \mathcal{F}^{\rho}$\label{l:nominal_rhrp_valid}}{
        $\mathcal{P}\gets\mathcal{P}^{\textnormal{nom}}$\;
    }{
        $\theta \gets$ first $s \in [\theta^{\textnormal{nom}}(L^{\textnormal{nom}}), \theta^g]$ s.t. $\gamma(s)\in\mathcal{F}^{\rho}$\;\label{l:theta_update_ds}
        $p^g\gets \gamma(\theta)$\;
        Compute $E^{\star},r^0,r^g,\rho$ using Algorithm \ref{alg:obstacle_transformation}\;
        Compute $\mathcal{P}$ according to \eqref{eq:receding_reference_path}-\eqref{eq:receding_reference_dynamics}\;\label{l:ds_rhrp}
    }
    Run solver for \eqref{eq:mpc}\;
    \If{\textnormal{\textbf{MPC MODE and}} $r^+ \in \mathbb{B}[\gamma(\theta), \rho]$\label{l:theta_update3}}{
        $\theta \gets \theta + w_0^*\Delta t$\;\label{l:theta_update}
    }
    Update $\mu$ and $r^+$ according to \eqref{eq:scl}-\eqref{eq:r_plus}\;
\end{algorithm}

\subsubsection*{Nominal RHRP (line \ref{l:nominal_rhrp})}
The nominal RHRP is constructed to move from the initial reference candidate, $r^+$, towards the position specified by the current path parameter, $\theta$. Specifically, the reference initially travels along the line $l_{r^+}=l[r^+,\gamma(\theta)]$ after which it follows $\Gamma$ from $\gamma(\theta)$ to the end. The nominal RHRP is formally defined as
\begin{equation}
\label{eq:nominal_rhrp}
    \mathcal{P}^{\textnormal{nom}} = \left\{r\in \mathbb{R}^2 : s \in [0, L^{\textnormal{nom}}] \rightarrow r^{\textnormal{nom}}(s) \right\},
\end{equation}
where $L^{\textnormal{nom}}\in[L,\infty)$ is a parameter specifying the length of the nominal RHRP. The reference mapping is given by
\begin{equation}
\label{eq:nominal_rhrp_mapping}
    r^{\textnormal{nom}}(s) = \begin{cases}
    \frac{1-s}{\lVert l_{r^+} \rVert_2}r^+ + \frac{s}{\lVert l_{r^+} \rVert_2}\gamma(\theta), &s < \lVert l_{r^+} \rVert_2,\\
    \gamma\left(\theta^{\textnormal{nom}}(s)\right), &s \geq \lVert l_{r^+} \rVert_2,
\end{cases}
\end{equation}
where $\theta^{\textnormal{nom}}(s)$ shifts the value $s$ according to
\begin{equation}
    \theta^{\textnormal{nom}}(s) = \textnormal{sat}(\theta + s - \lVert l_{r^+} \rVert_2,\theta,\theta^g)
\end{equation}
with $\textnormal{sat}(\cdot,lb,ub)=\max(lb,\min(ub,\cdot))$ being the saturation function.

\subsubsection*{RHRP usage (lines \ref{l:nominal_rhrp_valid}-\ref{l:ds_rhrp})}
The nominal path is used when $\mathcal{P}^{\textnormal{nom}}\subset\mathcal{F}^{\bar{\rho}}$, i.e., the nominal RHRP has a clearance $\bar{\rho}$ to the environment. In this case, the assignment $\mathcal{P}=\mathcal{P}^{\textnormal{nom}}$ ensures a collision-free movement along $\Gamma$ when applying the control law according to \eqref{eq:scl}. 
If $\mathcal{P}^{\textnormal{nom}}\not\subset\mathcal{F}^{\bar{\rho}}$, the RHRP is instead computed using the environment modification (Section \ref{sec:workspace_modification}) and the DS-based approach (Section \ref{sec:rhrp}) to ensure collision avoidance. The setpoint for the RHRP genaration is the first collision-free position (w.r.t. $\mathcal{F}^{\bar{\rho}}$) along $\Gamma$ after the endpoint of the nominal RHRP. This enables the generation of a path which circumvents obstacles that obstructs the nominal RHRP. 

\subsubsection*{Path parameter update (line \ref{l:theta_update})}
In accordance with the logic for the update of $r^+$ \eqref{eq:r_plus}, the path parameter is incremented by an amount equivalent to the one-step reference increment of the MPC solution, i.e., $w^*_0\Delta t$.
This is executed whenever the initial reference candidate lies within a $\rho$-neighborhood of the position specified by the current path parameter, i.e., when $r^+ \in \mathbb{B}[\gamma(\theta), \rho]$. The extra condition is included to prevent the path parameter from diverging from the robot's position after any path parameter shift made in collision avoidance mode (line \ref{l:theta_update_ds}).


\section{Results}
\label{sec:results}
To illustrate the performance of the control schemes, various simulation scenarios are carried out. A unicycle robot described by \eqref{eq:unicycle} and input constraints $\mathcal{U}=[-0.1, 1] \times [-1, 1]$ is considered in all cases. The Runge-Kutta method (RK4) is applied for integration of the system evolution which is updated at a frequency of $100$ Hz. Function approximation of the RHRP using a sixth-degree polynomial and RHRP buffering are applied as described in \cite{dahlin_karayiannidis_23_2}. The regularization term is defined to smoothen the trajectory as $J(z)=\sum_{i=0}^{N-1}(\bar{u}_i-u^d)^TR(\bar{u}_i-u^d)+(\bar{u}_i-\bar{u}_{i-1})^T_iR_{\Delta}(\bar{u}_i-\bar{u}_{i-1})$, with $u^d = [w^{\max}\ 0]^T$ being the desired control input and $\bar{u}_{-1}$ being the previously applied control input. The control sampling period is $\Delta t=0.2$ and the state integration in the MPC is performed using RK4. The SBC is defined as in \eqref{eq:unicycle_kappa} with $k_1=0.15, k_2=0.3$ ensuring $\kappa(x,r)\in\mathcal{U},\ \forall x\in\mathcal{X}, \forall r\in\mathbb{R}^2$. 
All numerical values for the control parameters are stated in Table \ref{tab:params}. 
To compare the result of the proposed path-following controller, we consider the conventional approach for path-following MPC with obstacle avoidance - a straightforward addition of ``no-go-zones'' for the robot trajectory \cite{arbo_etal_17,zube_15}. That is, using the MPC problem formulation \eqref{eq:mpc} where \eqref{eq:mpc_error} and \eqref{eq:mpc_w0_con} are replaced by the constraint $p(\tau)\not\in\mathcal{O}$ (this can efficiently be encoded as presented in \cite{zhang_etal_21}). We will refer to this method as the \textit{standard MPPFC}. Additionallly, evaluation of the xMPPFC \cite{sanchez_etal_21} is included, which incorporates an auxiliary trajectory. The tuning parameters are the same as for the proposed approach for the standard MPPFC, and are defined as in \cite{sanchez_etal_21} for the xMPPFC. For both standard MPPFC and xMPPFC the prediction horizon is twice as long ($N=12$) compared to the proposed method since converging behavior for these approaches depends strongly on the horizon length. Moreover, full knowledge of future obstacle poses within the horizon is assumed, whereas the proposed method only utilizes the current obstacle poses.

\begin{table}[!t]
\renewcommand{\arraystretch}{1.3}
\caption{Control parameters}
\label{tab:params}
\centering
\begin{tabular}{|c|c|c|c|c|c|c|c|c|c|}
\hline
\bfseries $\bar{\rho}$ & $\alpha$ & $N$ & $\lambda$ & $c_w$ & $c_e$ & R & $R_{\Delta}$ & $L^{\textnormal{nom}}$\\
\hline
0.2 & 0.9 & 6 & 0.5 & 1 & 1 & $
\setstacktabbedgap{2pt}
\parenMatrixstack{
0.1 & 0 \cr
0 & 0.1}$ 
& $
\setstacktabbedgap{2pt}
\parenMatrixstack{
0.1 & 0 \cr
0 & 0}$ & 2\\
\hline
\end{tabular}
\end{table}

\subsection{Setpoint Stabilization}
To illustrate the convergence properties derived in Section \ref{sec:setpoint_stabilization_scheme}, two static scenes as shown in Fig. \ref{fig:static_setpoint} are considered. The robot is initialized at different positions $p(t_0)\in \mathcal{F}$ with horizontal orientation, $\psi(t_0)=0$, for all cases. In Fig. \ref{fig:static_setpoint_dsw}, the environment form a DSW equivalent $\mathcal{F}^{\bar{\rho}}_s$ and convergence can be concluded a priori from any position $p(t_0)\in\mathcal{C}^{\bar{\rho}}_s$ by Theorem \ref{theorem:convergence_apriori} which is confirmed by the simulation results. The environment in Fig. \ref{fig:static_setpoint_non_dsw} is not DSW equivalent. However, $E^{\star}(t_0)$ is a DSW for all given initial positions and convergence to $p^g$ from any $p(t_0)\in\mathcal{C}^{\bar{\rho}}_s$ follows from Proposition \ref{theorem:convergence} which is also confirmed by the simulations. The robot is also initialized at one position in the set $\mathcal{F}\setminus\mathcal{C}^{\bar{\rho}}_s$ from where Proposition \ref{theorem:convergence} provides no convergence guarantee (lower left). Nonetheless, the robot converges to $p^g$, indicating stronger convergence than is theoretically proved. Note that the shape of the obstacles $\mathcal{O}^{\star}$ depends on the robot position, and $E^{\star}$ is thus different for each case as illustrated in Fig. \ref{fig:starworlds}. 

\begin{figure}[ht!]
\begin{subfigure}[t]{0.48\linewidth}
        \includegraphics[width=\linewidth,trim={3.5cm 1.3cm 2.8cm 1.5cm},clip]{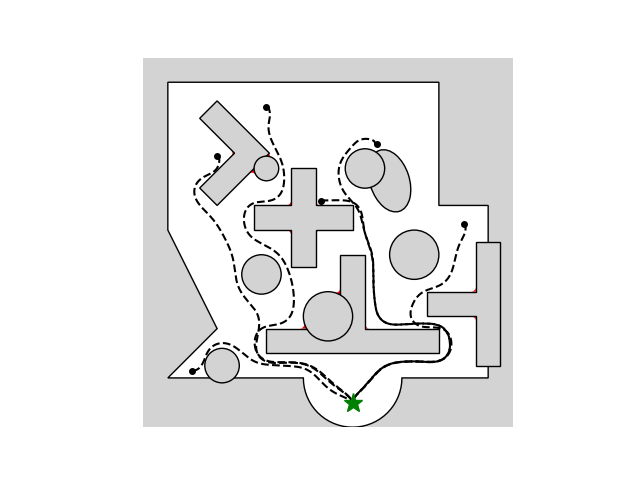}
     \caption{$E^{\bar{\rho}}_s$ is DSW equivalent and convergence to $p^g$ from initial positions $p(t_0)\in\mathcal{C}^{\bar{\rho}}_s$ can be concluded by Theorem \ref{theorem:convergence_apriori}.}
     \label{fig:static_setpoint_dsw}
    \end{subfigure}
    \hfill
    \begin{subfigure}[t]{0.481\linewidth}
        \includegraphics[width=\linewidth,trim={3.5cm 1.3cm 2.8cm 1.5cm},clip]{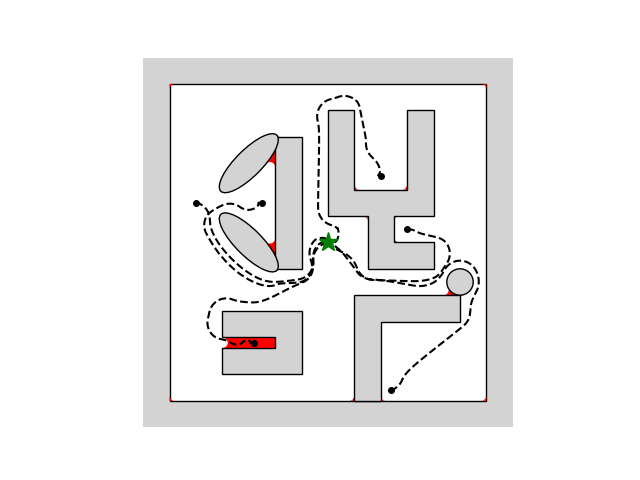}
         \caption{$E^{\star}(t_0)$ is a DSW for all given initial positions and convergence to $p^g$ from any $p(t_0)\in\mathcal{C}^{\bar{\rho}}_s$ can be concluded by Proposition \ref{theorem:convergence}.}
         \label{fig:static_setpoint_non_dsw}
    \end{subfigure}
    \caption{A static set of obstacles $\mathcal{O}$ (grey), and the set $\mathcal{F}\setminus \mathcal{C}^{\bar{\rho}}_s$ (red) from where convergence cannot be stated by Proposition \ref{theorem:convergence}. Travelled path (dashed black lines) to a goal position $p^g$ (green star) is shown from different initial positions $p(t_0)$ (black dots), all with horizontal initial orientation $\psi(t_0)=0$.}
    \label{fig:static_setpoint}
\end{figure}

\begin{figure}[ht!]
    \centering
    \begin{subfigure}[t]{0.45\linewidth}
        \includegraphics[width=\linewidth,trim={.5cm .5cm .5cm .5cm},clip]{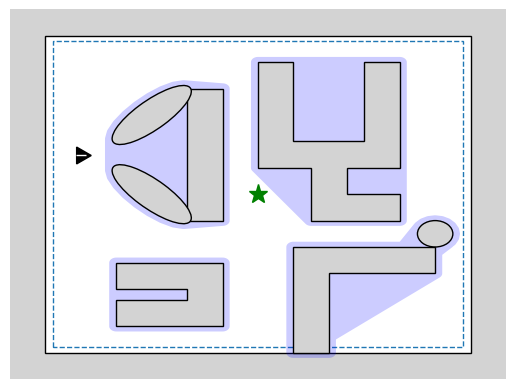}
    \end{subfigure}
    \hfill
    \begin{subfigure}[t]{0.45\linewidth}
        \includegraphics[width=\linewidth,trim={.5cm .5cm .5cm .5cm},clip]{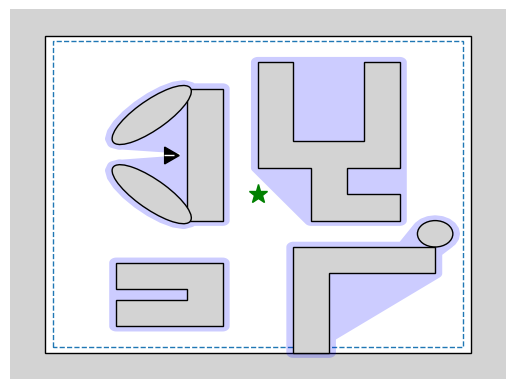}
    \end{subfigure}
    
    \vspace{0.2cm}
    
    \begin{subfigure}[t]{0.45\linewidth}
        \includegraphics[width=\linewidth,trim={.5cm .5cm .5cm .5cm},clip]{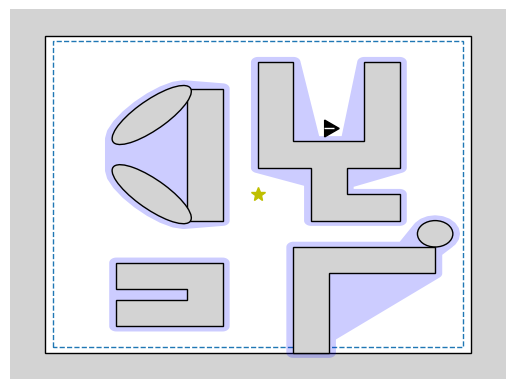}
    \end{subfigure}
    \hfill
    \begin{subfigure}[t]{0.45\linewidth}
        \begin{tikzpicture}[      
                every node/.style={anchor=south west,inner sep=0pt},
                x=1mm, y=1mm,
              ]   
             \node (fig1) at (0,0)
               {\includegraphics[width=\linewidth,trim={.5cm .5cm .5cm .5cm},clip]{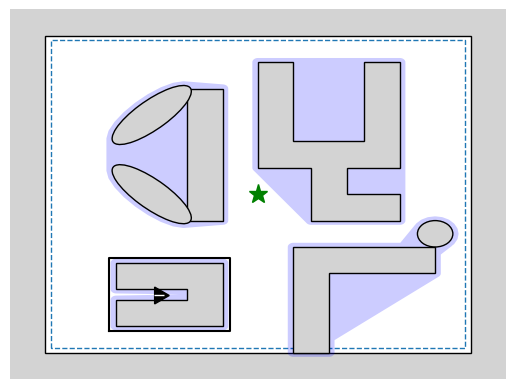}};
             \node (fig2) at (16,12)
               {\includegraphics[width=0.7\linewidth,trim={.2cm .2cm .2cm .2cm},clip]{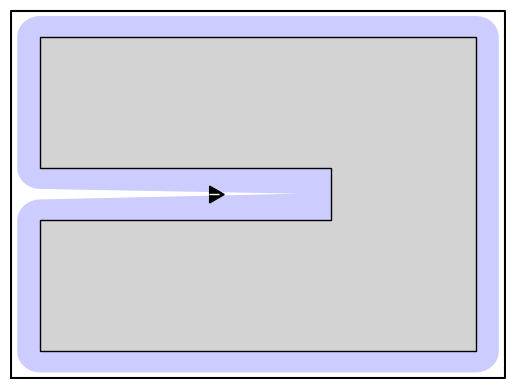}};
            \draw[densely dotted, line width=.8pt] (7.2, 9.6) -- (fig2.north west);
            \draw[densely dotted, line width=.8pt] (17.6, 3) -- (fig2.south east);
        \end{tikzpicture}
    \end{subfigure}
    \caption{The starshaped obstacles $\mathcal{O}^{\star}$ (blue) are formed to obtain a starshaped enclosing of the obstacles $\mathcal{O}$ (grey) while remaining the robot position (black triangle) an exterior point.}
    \label{fig:starworlds}
\end{figure}

As suggested in \cite{huber_etal_22}, also non-starshaped workspaces can be treated by dividing the workspace into several ordered subregions with corresponding local dynamics generated by a high-level planner. In Fig. \ref{fig:several_boundaries}, a concave workspace is divided into four intersecting rectangles, each assigned with a goal point that guides the robot towards the next subregion. The subregions are activated as current workspace in a consecutive manner when the robot enters the region interior. The environment contains four moving circular obstacles and one static polygon. At time $12.5$s, the robot enters on the right side of the polygon obstacle. At time $13.6$s, the closest circular obstacle has moved towards the polygon such that the gap between them has closed. At this point, the RHRP drastically changes to circumvent the polygon on the left side. Due to the limited rotational velocity and reverse speed of the robot, the enforced initial forward reference motion \eqref{eq:mpc_w0_con} is conflicting with the tracking error constraint \eqref{eq:mpc_error} and the MPC problem is infeasible. During the time period $t\in [13.6,15.2]$, the SBC is applied and the robot realigns with the RHRP, enabling feasibility of \eqref{eq:mpc} at $t=15.2$s and afterwards.

\def\figscale{0.75}
\begin{figure}[H]
    \begin{subfigure}[t]{0.48\linewidth}
        \centering
            \includegraphics[width=\figscale\linewidth,trim={1cm .5cm 1cm 0.5cm},clip]{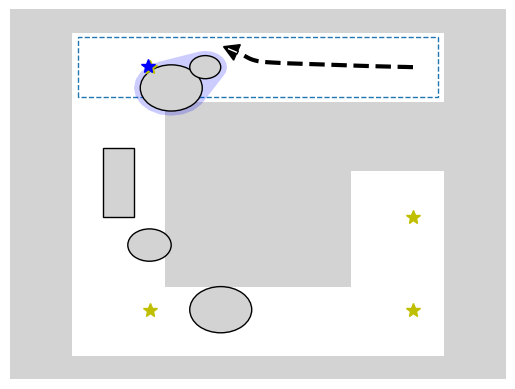}
         \caption{$t=5s$}
         \label{fig:several_boundaries_1}
    \end{subfigure}
    \hfill
    \begin{subfigure}[t]{0.48\linewidth}
        \centering
            \includegraphics[width=\figscale\linewidth,trim={1cm .5cm 1cm 0.5cm},clip]{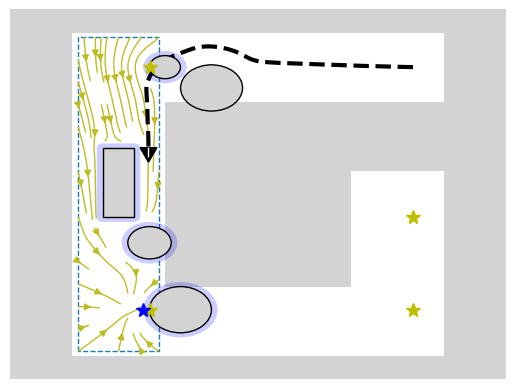}
         \caption{$t=12.5s$. The RHRP guides the robot to the right of the polygon obstacle as suggested by the vector field of \eqref{eq:receding_reference_dynamics} (yellow arrows).}
         \label{fig:several_boundaries_2}
    \end{subfigure}
    \begin{subfigure}[t]{0.48\linewidth}
        \centering
            \includegraphics[width=\figscale\linewidth,trim={.5cm .5cm .5cm 0.5cm},clip]{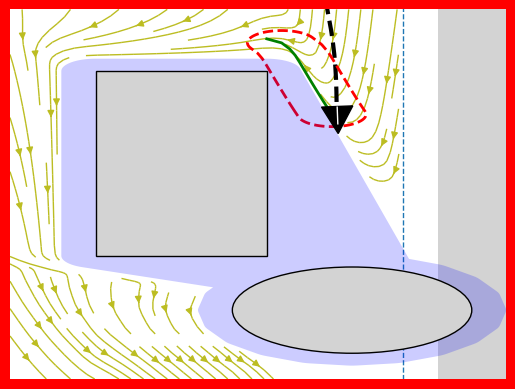}
         \caption{$t=13.6s$. The vector field of \eqref{eq:receding_reference_dynamics} (yellow arrows), and thus the RHRP, has drastically changed, so that the MPC problem is infeasible and the controller enters \textnormal{\textbf{SBC MODE}}.}
         \label{fig:several_boundaries_3}
    \end{subfigure}
    \hfill
    \begin{subfigure}[t]{0.48\linewidth}
        \centering
            \includegraphics[width=\figscale\linewidth,trim={.5cm .5cm .5cm 0.5cm},clip]{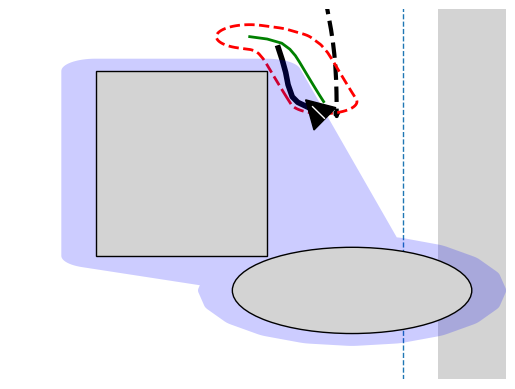}
         \caption{$t=15.2s$. The robot has realigned with the RHRP, making the MPC problem feasible again (with solution path shown as solid black line), and the controller enters \textnormal{\textbf{MPC MODE}}.}
         \label{fig:several_boundaries_4}
    \end{subfigure}
    \begin{subfigure}[t]{0.48\linewidth}
        \centering
            \includegraphics[width=\figscale\linewidth,trim={1cm .5cm 1cm 0.5cm},clip]{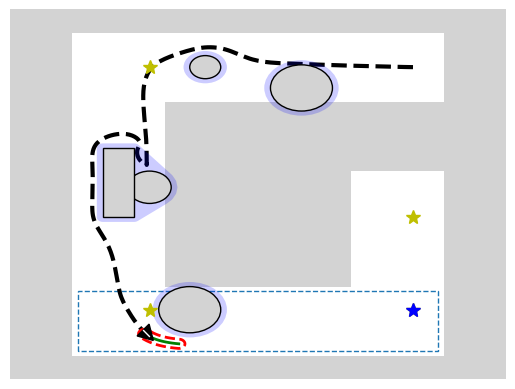}
         \caption{$t=27s$}
         \label{fig:several_boundaries_5}
    \end{subfigure}
    \hfill
    \begin{subfigure}[t]{0.48\linewidth}
        \centering
            \includegraphics[width=\figscale\linewidth,trim={1cm .5cm 1cm 0.5cm},clip]{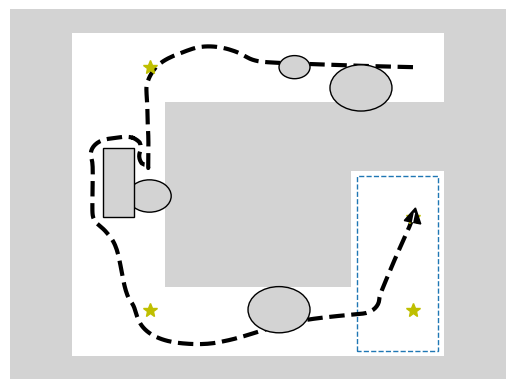}
         \caption{$t=40s$}
         \label{fig:several_boundaries_6}
    \end{subfigure}
    \caption{The robot navigates in a non-starshaped workspace divided into several subregions with corresponding goal points (yellow stars) to the final goal. The active clearance workspace $\mathcal{W}^{\rho}(t)$ (blue dashed line), starshaped obstacles $\mathcal{O}^{\star}(t)$ (blue), and active goal point (blue star) depends on the robot position.}
    \label{fig:several_boundaries}
\end{figure}

A scenario where the robot navigates in a crowded dynamic environment where each obstacle is a circle of radius 0.5 m is also demonstrated in Fig. \ref{fig:crowd_setpoint}. The workspace is considered as a disk of radius 4 m moving with the robot, $\mathcal{W}(t)=\mathbb{B}[p(t), 4]$. Any obstacle touching this region is identified by the robot and included in the obstacle set, $\mathcal{O}(t)$. The robot is initialized at five different positions with vertical orientation and all obstacles move straight downwards in the scene with speeds between $0.1$ and $0.3$ m/s. In all cases, the robot avoids the obstacles and converges to $p^g$. 

\def\figscale{0.31}
\def\figheight{4.5cm}
\begin{figure}[ht!]
    \begin{subfigure}[t]{\figscale\linewidth}
        \centering
        \includegraphics[height=\figheight,trim={2cm 0.4cm 2.2cm .8cm},clip]{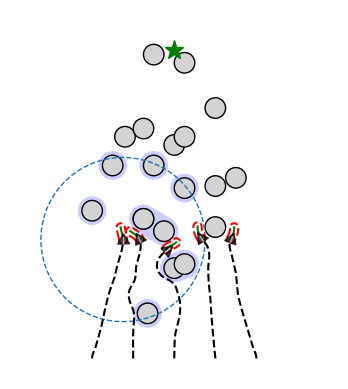}
     \caption{$t=6$s.}
     \label{fig:crowd_1}
    \end{subfigure}
    \hfill
    \begin{subfigure}[t]{\figscale\linewidth}
        \centering
        \includegraphics[height=\figheight,trim={2cm 0.4cm 2.2cm .8cm},clip]{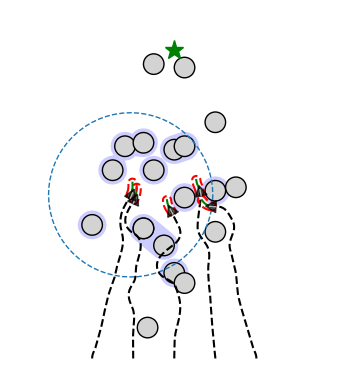}
         \caption{$t=8.3$s.}
         \label{fig:crowd_2}
    \end{subfigure}
    \hfill
    \begin{subfigure}[t]{\figscale\linewidth}
        \centering
        \includegraphics[height=\figheight,trim={2cm 0.4cm 2.2cm .8cm},clip]{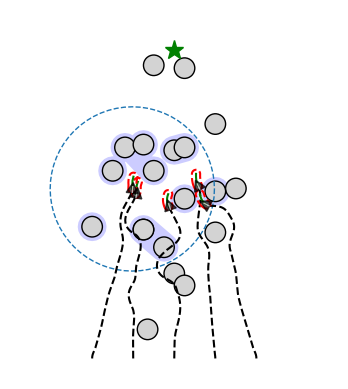}
         \caption{$t=8.6$s.}
         \label{fig:crowd_3}
    \end{subfigure}
    \begin{subfigure}[t]{\figscale\linewidth}
        \centering
        \includegraphics[height=\figheight,trim={2cm 0.4cm 2.2cm .8cm},clip]{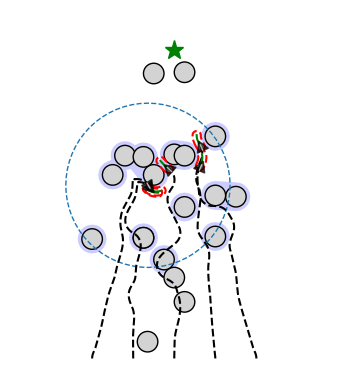}
         \caption{$t=10.6$s.}
         \label{fig:crowd_4}
    \end{subfigure}
    \hfill
    \begin{subfigure}[t]{\figscale\linewidth}
        \centering
        \includegraphics[height=\figheight,trim={2cm 0.4cm 2.2cm .8cm},clip]{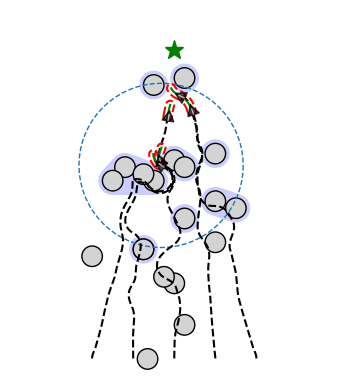}
         \caption{$t=13.4$s.}
         \label{fig:crowd_5}
    \end{subfigure}
    \hfill
    \begin{subfigure}[t]{\figscale\linewidth}
        \centering
        \includegraphics[height=\figheight,trim={2cm 0.4cm 2.2cm .8cm},clip]{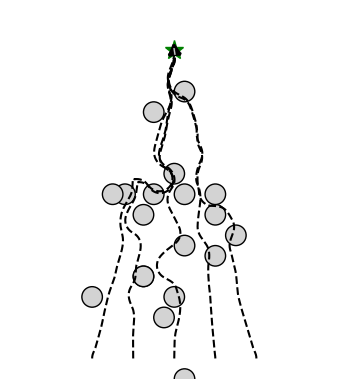}
         \caption{$t=20$s.}
         \label{fig:crowd_6}
    \end{subfigure}
    \caption{A set of moving obstacles $\mathcal{O}$ (grey) and five robot positions (black triangles) with travelled paths (dashed black lines) and RHRP (green line) along with boundary of accompanying tunnel region $\mathcal{P}\oplus \mathbb{B}[0,\rho]$ (dashed red line). The workspace boundary $\partial \mathcal{W}$ (dashed blue line) and starshaped obstacles $\mathcal{O}^{\star}$ (blue) are depicted for the leftmost robot position.}
    \label{fig:crowd_setpoint}
\end{figure}

\subsection{Path-following}
In Fig. \ref{fig:infinity_pf_2}, an $\infty$-shaped reference path is used, given by $\gamma(\theta) = \big[6\cos\left(\frac{2\pi}{\theta^g}\theta\right),\allowbreak 3\sin\left(\frac{4\pi}{\theta^g}\theta\right)\big]$ with $\theta^g=36.5$. The path is obstructed by a circular obstacle, three intersecting circular obstacles and a starshaped polygon. Each time the nominal RHRP penetrates the clearance obstacles, $\mathcal{O}^{\rho}$, the RHRP is generated using the setpoint stabilization approach. The setpoint is given as the first collision-free position after the nominal RHRP along $\Gamma$ as illustrated in Fig. \ref{fig:infinity_pf_1}. In comparison, the standard MPPFC successfully circumvents the first obstacle, but as the path gets blocked by the three intersecting obstacles, the robot comes to a full stop. This can be explained by the extensive path deviation that is needed to circumvent the obstacles which is not motivated by the cost function within the horizon. The xMPPFC successfully circumvent both the first obstacle and the three intersecting obstacles, but comes to full stop as it reaches the polygon obstacle.

\begin{figure}[ht!]
    \centering
        \includegraphics[width=0.6\linewidth]{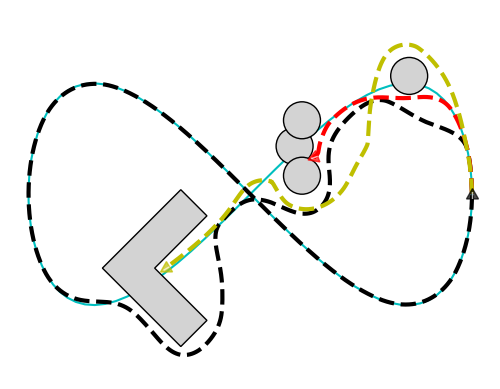}
      \caption{The robot successfully avoids the obstacles (grey) and returns to follow $\Gamma$ (cyan line) when using the proposed path-following controller (black dashed line). The robot comes to full stop by the three intersecting obstacles when using the standard MPPFC (red dashed line), and by the starshaped polygon when using the xMPPFC (yellow dashed line).}
     \label{fig:infinity_pf_2}
\end{figure}
\begin{figure}[ht!]
    \centering
    \includegraphics[width=0.6\linewidth]{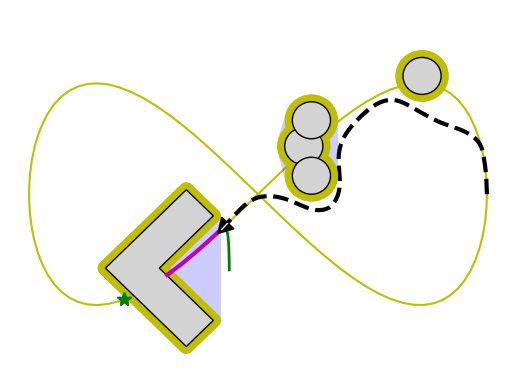}
     \caption{The nominal RHRP (magenta line) intersects the clearance obstacles $\mathcal{O}^{\rho}$ (yellow) and the RHRP (green line) is generated using the setpoint stabilization approach with goal position given as next collision-free position after the nominal RHRP (green star).}
     \label{fig:infinity_pf_1}
\end{figure}

Instead of providing local goal points for each subregion as in the example of Fig. \ref{fig:several_boundaries}, an alternative approach involves the high-level planner supplying a global path based on the static environment. This can be used by a path-following controller, exemplified in Fig. \ref{fig:several_boundaries_pf}.
As observed, the proposed path-following controller yields a comparable outcome to the one shown in Figure \ref{fig:several_boundaries}, with the distinction that the introduced path is now closely tracked whenever feasible. Similar to the example depicted in Fig. \ref{fig:infinity_pf_2}, both the standard MPPFC and xMPPFC can circumvent the initial obstacles to continue along the path, but as a major deviation from the path is needed, the robot comes to a full 

\begin{figure}[ht!]
    \centering
    \includegraphics[width=0.52\linewidth]{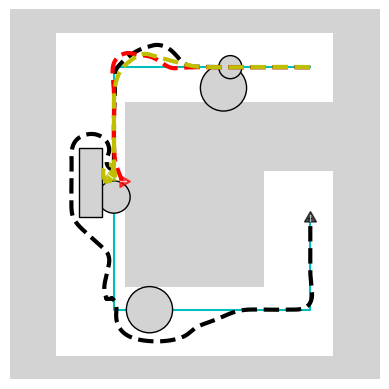}
    \caption{The path-following controller with a given path (cyan line) is used in the same dynamic scenario as in Fig \ref{fig:several_boundaries}. The robot comes to full stop both when using standard MPPFC (red dashed line) and xMPPFC (yellow dashed line), whereas convergence to the final position is achieved when using the proposed control scheme (black dashed line).}
    \label{fig:several_boundaries_pf}
\end{figure}

\section{Conclusion}
This article proposed a motion control scheme for robots operating in a workspace containing a collection of dynamic, possibly intersecting, obstacles. Both the setpoint stabilization and path-following problem was treated.
The method combines environment modification into a scene of disjoint obstacles with a closed form dynamical system formulation to generate a receding horizon path. A novel MPC formulation was proposed to enforce forward motion along the path within an obstacle-clearance zone, which in combination with a stabilizing backup controller allowed for formal derivations of collision avoidance and convergence properties.

As the method relies on conservative treatment of the obstacle regions, an inherent drawback is the possible gap closing in narrow passages. In its current form, only current obstacle positions is considered, and an extension to enable incorporation of predicted obstacle poses would be beneficial to prevent unnecessary maneuvers.

\section*{Appendix}
\label{paperF:sec:appendix}
\addcontentsline{toc}{section}{Appendix}
\textsl{A. Kernel point selection for ModEnv\texorpdfstring{$^{\star}$}{Star}}

Algorithm \ref{alg:kernel_point_selection} computes the kernel points, $K$, for generating a starshaped obstacle using starshaped hull with specified kernel. The input are the cluster of obstacles to be enclosed by the generated starshape, $cl$, the points to exclude in the obstacle region, $\bar{X}$, and the kernel centroid for $cl$ at previous iteration (if applicable), $k_c^{prev}$. The kernel point selection is extended from Algorithm 3 in \cite{dahlin_karayiannidis_23_1} in two ways: 1) in scenarios with confined workspace, the kernel selection is restricted to the workspace exterior for any cluster which is not fully contained in the workspace, if possible (line \ref{l:kernel_workspace1}-\ref{l:kernel_workspace2}), 2) the kernel selection is restricted to the intersecting kernel region of the clustered obstacles, if possible (lines \ref{l:intersecting_kernel_condition}-\ref{l:intersecting_kernel}). For more details on the full procedure, the reader is referred to \cite{dahlin_karayiannidis_23_1}.

\LinesNumbered
\begin{algorithm}
    \caption{Kernel point selection for ModEnv$^{\star}$} 
    \label{alg:kernel_point_selection}
    Line 1 in Algorithm 3 of \cite{dahlin_karayiannidis_23_1}\;
    \If{\textnormal{\textbf{not}} $cl_{\cup}\subset\mathcal{W}$ \textnormal{\textbf{and}} $S \setminus \mathcal{W}\neq\emptyset$\label{l:kernel_workspace1}}{
        $S\gets S \setminus \mathcal{W}$\label{l:kernel_workspace2}\;
    }
    \uIf{$\forall \mathcal{O}^i\in cl$ \textnormal{starshaped \textbf{and}} $S \cap \textnormal{ker}_{\cap}(cl) \neq \emptyset$\label{l:intersecting_kernel_condition}}{
        $S \gets S \cap \textnormal{ker}_{\cap}(cl)$\;\label{l:intersecting_kernel}
    }
    \uElseIf{$S \cap cl_{\cup} \neq \emptyset$}{
        $S \gets S \cap cl_{\cup}$\;
    }
    Lines 4-13 in Algorithm 3 of \cite{dahlin_karayiannidis_23_1}\;
\end{algorithm}

\noindent\textsl{B. Proof of Theorem \ref{theorem:dsw_success}}

\label{a:dsw_success}
Given the cluster set, $Cl$, at the start of an iteration of Algorithm 2 of \cite{dahlin_karayiannidis_23_1}, it follows from Property 4b and 4d of \cite{dahlin_karayiannidis_23_1} that for the generated starshaped obstacles set at that iteration, $\mathcal{O}^{\star}$, it holds for any $i\in[1,..,\lvert Cl \rvert]$ that $K^i\subset\textnormal{ker}_{\cap}(cl^i)\Rightarrow \mathcal{O}^{\star,i}=SH_{\textnormal{ker}K^i}(cl^i)=cl_{\cup}^i$. 
The kernel selection Algorithm 1 computes $K^i\subset S^i$ with $S^i = \textnormal{ker}_{\cap}(cl^i)$ if $cl_{\cup}\subset\mathcal{W}$ and $S^i=\textnormal{ker}_{\cap}(cl^i)\setminus \mathcal{W}$ if $cl_{\cup}\not\subset\mathcal{W}$, given that $S^i$ is nonempty for these selections.

At first iteration in Algorithm 2 of \cite{dahlin_karayiannidis_23_1} we have $Cl=\mathcal{O}$ and $\textnormal{ker}_{\cap}(cl^i)=\textnormal{ker}(\mathcal{O}^i)$. Since the environment is DSW equivalent, the set $S^i$ specified above is nonempty. Specifically, $\textnormal{ker}(\mathcal{O}^i)$ is nonempty since $\mathcal{O}^i$ is starshaped and $\textnormal{ker}(\mathcal{O}^i)\setminus\mathcal{W}$ is nonempty for any obstacle $\mathcal{O}^i\not\subset\mathcal{W}$ due to condition \eqref{eq:dsw_workspace_kernel}. Thus, $K^i\subset\textnormal{ker}_{\cap}(cl^i)$ and $\mathcal{O}^{\star,i}=\mathcal{O}^i,\ i\in[1,..,\lvert Cl\rvert]$. As a consequence, $Cl=Cl^{\star}$ after the assignment in line 13.
By construction, $Cl^{\star}$ consists of mutually disjoint connected subsets of $\mathcal{O}^{\star}$ and $Cl$ thus consists of mutually disjoint connected subsets of $\mathcal{O}$. If the environment is a DSW, it follows that $Cl=\mathcal{O}$ and the algorithm returns the DSW. Otherwise, $Cl$ satisfies \eqref{eq:dsw_kernel} and \eqref{eq:dsw_workspace_kernel} since the environment is DSW equivalent and the division of a set into  mutually disjoint connected subsets is unique. Hence, $S^i$ specified above is nonempty. Thus, $K^i\subset\textnormal{ker}_{\cap}(cl^i)$ and $\mathcal{O}^{\star,i}=cl_{\cup}^i,\ i\in[1,..,\lvert Cl\rvert]$. Hence, $\mathcal{O}^{\star}$ is a mutually disjoint subset of $\mathcal{O}$ with $\mathcal{O}^{\star}_{\cup}=\mathcal{O}_{\cup}$, and $\lvert \mathcal{O}^{\star} \rvert = \lvert Cl \rvert$ such that the algorithm terminates. Since all regions in $\mathcal{O}^{\star}$ are starshaped by construction, and where any clustered obstacle satisfies \eqref{eq:dsw_workspace_kernel} the environment  $\{\mathcal{W},\mathcal{O}^{\star}\}$ is a DSW.
\\

\noindent\textsl{C. Proof of Theorem \ref{theorem:collision_avoidance}}

\label{a:proof_collision_avoidance}
We have 
\begin{equation*}
\begin{split}
    &p(t)\in\mathcal{F}(t_k),\ \forall t\in(t_k, t_{k+1}] \\ \xRightarrow{Ass.\ref{ass:slow_obstacles}}\
    &p(t)\in\mathcal{F}(t),\ \forall t\in(t_k, t_{k+1})\ \&\ p(t_{k+1})\in\mathcal{F}(t_k)\\
    \xRightarrow{Ass.\ref{ass:not_aggressive_obstacles}}\
    &p(t)\in\mathcal{F}(t),\ \forall t\in(t_k, t_{k+1}].
\end{split}
\end{equation*}
Since $p(t_0)\in\mathcal{F}(t_0)$, it therefore suffices to show that $p(t)\in\mathcal{F}(t_k),\ \forall t\in(t_k, t_{k+1}]$ given $p(t_k)\in\mathcal{F}(t_k)$ at any sampling instance, $t_k$. 

Consider first the case where the controller is in \textbf{MPC MODE} at time $t_k$ and thus $u(t) = \bar{u}^*_0(t_k),\ \forall t\in [t_k,t_{k+1})$. Then $\lVert p(t+\tau)-r_k(w^*_{k,0}\tau)\rVert_2 \leq \rho(t_k), \forall \tau\in[0,\Delta t]$ from \eqref{eq:robot_model} and \eqref{eq:mpc_robot_dyn}-\eqref{eq:mpc_error}. Here $r_k(\cdot)\in\mathcal{P}(t_k)$ is the RHRP-mapping at time instance $t_k$ and $w^*_{k,0}$ is the initial path speed of the optimal solution $z^*(t_k)$. Thus, $p(t)\in\mathcal{P}^{\rho}(t_k)\subset\mathcal{F}(t_k),\ \forall t\in (t_k,t_{k+1}]$.
If the controller instead is in \textbf{SBC MODE}, the SBC is applied, $u(t)=\kappa(x(t),r^0(t_k)),\ \forall t\in [t_k,t_{k+1})$. Since $\lVert r^0(t_k)-p(t_k)\rVert_2\leq \rho(t_k)$ by definition of $r^0$, it follows from the definition of SBC that $\lVert r^0(t_k)-p(t)\rVert \leq \lVert r^0(t_k)-p(t_k)\rVert_2\leq \rho(t_k), \forall t\in (t_k,t_{k+1}]$. That is $p(t)\in\mathbb{B}[r^0(t_k),\rho(t_k)]\subset\mathcal{P}^{\rho}(t_k) \subset \mathcal{F}(t_k),\ \forall t\in(t_k, t_{k+1}]$.
\\

\noindent\textsl{D. Proof of Proposition \ref{theorem:convergence}}

\label{a:proof_convergence}
For ease of notation, let $\rho_k=\rho(t_k)$. The proof is given in five steps. In the first two steps, we show that the environment modification is static after $k=0$ in the sense that $\rho_k=\bar{\rho}, \forall k\geq0$ and $\mathcal{F}^{\star}_k=\mathcal{F}^{\star}_0, \forall k\geq0$. In step 3 we show that the initial reference point $r^0_k$ is following the parameterized regular curve $\hat{\Gamma}=\{\hat{\gamma} \in \mathbb{R}^2 : \hat{\theta} \in [0, \infty) \rightarrow \hat{\gamma}(\hat{\theta})\}$ given by 
\begin{equation}
\label{eq:gamma}
    \frac{d\hat{\gamma}(\hat{\theta})}{d\hat{\theta}} = \bar{\nu}(\hat{\gamma}(\hat{\theta}),p^g, \mathcal{O}^{\star}_0),\quad \hat{\gamma}(0) = r^+_0,
\end{equation}
which converges to $p^g$. Specifically, we show $r^0_k=\hat{\gamma}(\hat{\theta}_k),\forall k\geq 0$ given the virtual path coordinate $\hat{\theta}$ with $\hat{\theta}_0=0$ and dynamics
\begin{equation}
\label{eq:theta_dynamics}
    \hat{\theta}_{k+1} = \begin{cases}
    \hat{\theta}_k + w^*_{k,0}\Delta t,\quad & \textnormal{\textbf{MPC MODE}}\\
    \hat{\theta}_k,\quad & \textnormal{\textbf{SBC MODE}}.
    \end{cases}
\end{equation}
In step 4 we show that $r^0_k$ converge to $p^g$ in finite time, and in step 5 it is shown that this implies convergence of $p$ to $p^g$.

\textit{Step 1 $\left(\rho_k=\bar{\rho},\ \forall k\geq 0\right)$:}
Assume $\exists k\geq 0,\ \rho_k=\bar{\rho},\ \rho_{k+1}\neq\bar{\rho}$. From the proof of Theorem \ref{theorem:collision_avoidance}, with use of the fact $\mathcal{O}_{k+1}=\mathcal{O}_k$, we have $p_{k+1}\in \mathcal{P}^{\rho}_k\subset \mathcal{F}^{\bar{\rho}}_k\oplus \mathbb{B}[0,\bar{\rho}]=\mathcal{F}^{\bar{\rho}}_{k+1}\oplus \mathbb{B}[0,\bar{\rho}] = \mathcal{C}^{\bar{\rho}}_{k+1}$. Hence, Algorithm \ref{alg:obstacle_transformation} yields $\rho_{k+1}=\bar{\rho}$. This is a contradiction and we can conclude $\rho_k=\bar{\rho} \Rightarrow \rho_{k+1}=\bar{\rho},\ \forall k\geq 0$. Since $p_0\in \mathcal{C}^{\bar{\rho}}_0$ and therefore $\rho_0=\bar{\rho}$, we can conclude that $\rho_k=\bar{\rho},\ \forall k\geq 0$.

\textit{Step 2 ($\mathcal{F}^{\star}_k=\mathcal{F}^{\star}_0,\ \forall k\geq 0$):} Since $\rho_k=\bar{\rho},\ \forall k\geq 0$ it follows that $\mathcal{F}^{\star}_k\subset \mathcal{F}^{\rho}_{k+1},\ \forall k\geq 0$. Given $\mathcal{F}^{\star}_k$ is a DSW, it follows from Algorithm \ref{alg:obstacle_transformation} that $\mathcal{F}^{\star}_{k+1}=\mathcal{F}^{\star}_k$ if $r^0_{k+1}\in \mathcal{F}^{\star}_k$ and $r^g_{k+1}\in \mathcal{F}^{\star}_k$. 
From the proof of Theorem \ref{theorem:collision_avoidance} and from \eqref{eq:r_plus} we have $p_{k+1}\in \mathbb{B}[r_k(w^*_{k,0}\Delta t),\bar{\rho}]=\mathbb{B}[r_k^+,\bar{\rho}]$ when in \textbf{MPC MODE} and $p_{k+1}\in \mathbb{B}[r^0_k,\bar{\rho}]=\mathbb{B}[r_k^+,\bar{\rho}]$ when in \textbf{SBC MODE}. Since $r^+_k\in\mathcal{P}_k\subset\mathcal{F}^{\rho}_{k+1}$ by definition, it follows that $r^+_k \in \mathcal{P}^0_{k+1}$ and hence $r_{k+1}^0 = r_k^+$. Then, $r^0_{k+1}\in\mathcal{P}_k\subset\mathcal{F}^{\star}_k$.
The reference goal is given by $r^g_{k+1} = \argmin_{r^g\in \mathcal{F}_{k+1}^{\rho}} \lVert r^g - p^g \lVert_2 = \argmin_{r^g\in \mathcal{F}_k^{\rho}} \lVert r^g - p^g \lVert_2 = r^g_k \in\mathcal{F}^{\star}_k$. Thus, $\mathcal{F}^{\star}_k$ being a DSW implies $\mathcal{F}^{\star}_{k+1}=\mathcal{F}^{\star}_k,\ \forall k\geq 0$. Since $\mathcal{F}^{\star}_0$ is a DSW, it follows that $\mathcal{F}^{\star}_k=\mathcal{F}^{\star}_0,\ \forall k\geq 0$.

\textit{Step 3 $\left(r^0_k=\hat{\gamma}\left(\hat{\theta}_k\right), \forall k\geq 0\right)$:} 
Assume $r^0_{k}=\hat{\gamma}(\hat{\theta}_{k})$ and hence $r_{k}(s)=\hat{\gamma}(\hat{\theta}_{k}+s)$. From Step 2 we have $r^0_{k+1}=r^+_k$. When in \textbf{SBC MODE}, $r^0_{k+1}=r^0_k=\hat{\gamma}(\hat{\theta}_{k})=\hat{\gamma}(\hat{\theta}_{k+1})$. When in \textbf{MPC MODE}, $r^0_{k+1}=r_k(w^*_{k,0}\Delta t)=\hat{\gamma}(\hat{\theta}_k+w^*_{k,0}\Delta t)=\hat{\gamma}(\hat{\theta}_{k+1})$. Thus, $r^0_k=\hat{\gamma}(\hat{\theta}_k)\Rightarrow r^0_{k+1}=\hat{\gamma}(\hat{\theta}_{k+1})$.
Now, $r_0^0=r^+_0=\hat{\gamma}(0)=\hat{\gamma}(\hat{\theta}_0)$ and we can conclude $r^0_k=\hat{\gamma}(\hat{\theta}_k), \forall k\geq 0$.

\textit{Step 4 ($\exists j < \infty\ s.t.\ r^0_k=p^g, \forall k \geq j$):} 
Let $\hat{\theta}^g$ be the arc length of $\hat{\Gamma}$ such that $\hat{\gamma}(\hat{\theta})=p^g\ \forall \hat{\theta} \geq \hat{\theta}^g$. Such a $\hat{\theta}^g$ exists due to the converging properties of \eqref{eq:receding_reference_dynamics} in a DSW. 
Let $K=\left\lceil \frac{\hat{\theta}^g}{\lambda\bar{\rho}} \right\rceil$. If at time step $k$ the controller has been in \textbf{MPC MODE} at $K$ previous time iterations, it follows from \eqref{eq:theta_dynamics} and \eqref{eq:mpc_w0_con} that $\hat{\theta}_k \geq K\lambda\bar{\rho}\geq\hat{\theta}^g$. Now assume $\hat{\theta}_k < \hat{\theta}^g\ \forall k$. A solution to \eqref{eq:mpc} can then be found at most $K-1$ times, i.e. there exists a $k'<\infty$ where \eqref{eq:mpc} is infeasible for any $k\geq k'$. Then $r^0_k=r_{k'}^0, \mathcal{P}_k=\mathcal{P}_{k'}, \forall k\geq k'$ and the SBC is applied from this time instance, $u(t)=\kappa(x(t),r^0_{k'}), \forall t\geq t_{k'}$. 
Define $\epsilon=\max_{s\in[0,\lambda\bar{\rho}]}\lVert r^0_{k'}-r_{k'}(s)\rVert_2$. Due to the normalized dynamics \eqref{eq:receding_reference_path} we have $\epsilon\leq\lambda\bar{\rho}<\bar{\rho}$. Since $\lim_{t\rightarrow\infty} p(t)=r^0_{k'}$ according to the definition of the SBC, there exists a finite $k''<\infty$, s.t. $p_{k''}\in\mathbb{B}[r^0_{k'},\bar{\rho}-\epsilon]$ due to the continuity of the solution. 
Now consider the solution for \eqref{eq:mpc} $w_{0}=\frac{\lambda\bar{\rho}}{\Delta t}$, $w_{i}=0, \forall i>0$ and $\bar{u}_{i}=u', \forall i$ with $f(x,u')=0,\ \forall x$.
This is a feasible solution at time instance $k''$ since $\varepsilon(\tau)\leq \max_{s\in[0,\lambda\bar{\rho}]}\lVert p_{k''}-r_{k'}(s)\rVert_2 \leq \bar{\rho},\ \forall \tau\in[0, T]$. This is a contradiction to the conclusion that \eqref{eq:mpc} is infeasible for $k\geq k'$ and no such $k'$ exists. Thus, $\exists k<\infty \textnormal{ s.t. } \hat{\theta}_k\geq \hat{\theta}^g$. Since $w^*_{k,0}>0$, we have $\hat{\theta}_{k+1}\geq \hat{\theta}_k, \forall k$ and thus $\exists j<\infty \textnormal{ s.t. } \hat{\theta}_k\geq \hat{\theta}^g, \forall k\geq j$. Then, from Step 3 and the definition of $\hat{\theta}_g$, it follows that $\exists j<\infty \textnormal{ s.t. } r^0_k=p^g, \forall k\geq j$.

\textit{Step 5 ($\lim_{t\rightarrow\infty} p(t)=p^g$):}
From step 4 we have that there exists some time instance $j$ where $r^0_k=p^g,\ \forall k\geq j$. This implies $r^g_k=p^g$ from this point and the controller stays in \textbf{SBC MODE}. 
Since the SBC renders asymptotically stable closed-loop error dynamics, it can be concluded that $\lim_{t\rightarrow\infty} p(t)=p^g$.

\bibliographystyle{IEEEtran}
\bibliography{references}



\end{document}